\documentclass[11pt]{article}


\title{The Fine-Grained Hardness of\\Sparse Linear Regression}
\usepackage{times}
\usepackage{macros}



\author{
Aparna Gupte\thanks{Email: \texttt{agupte@mit.edu}}\\MIT \and Vinod Vaikuntanathan\thanks{Email: \texttt{vinodv@mit.edu}}\\MIT
}

\begin{document}

\maketitle

\begin{abstract}%
  Sparse linear regression is the well-studied inference problem where one is given a design matrix $\mathbf{A} \in \mathbb{R}^{\slrrows\times \slrcolumns}$ and a response vector $\mathbf{b} \in \mathbb{R}^\slrrows$, and the goal is to find a solution $\mathbf{x} \in \mathbb{R}^{\slrcolumns}$ which is $k$-sparse (that is, it has at most $k$ non-zero coordinates) and minimizes the prediction error $\|\mathbf{A} \mathbf{x} - \mathbf{b}\|_2$. On the one hand, the problem is known to be $\mathcal{NP}$-hard which tells us that no polynomial-time algorithm exists unless $\mathcal{P} = \mathcal{NP}$. On the other hand, the best known algorithms for the problem do a brute-force search among $\slrcolumns^k$ possibilities. In this work, we show that there are {\em no better-than-brute-force} algorithms, assuming any one of a variety of popular conjectures including the weighted $k$-clique conjecture from the area of fine-grained complexity, or the hardness of the closest vector problem from the geometry of numbers. We also show the impossibility of better-than-brute-force algorithms when the prediction error is measured in other $\ell_p$ norms, assuming the strong exponential-time hypothesis.%
\end{abstract}

\section{Introduction}
\label{sec:intro}

High-dimensional inference problems come up naturally in many fields of science and engineering including computational biology and genomics, natural language processing, medical imaging and remote sensing. A well-known example is the standard linear regression problem where one is given data items $\veca_1,\veca_2,\ldots,\veca_{\slrrows} \in \mathbb{R}^\slrcolumns$ and responses $b_1,\ldots,b_{\slrrows} \in \mathbb{R}$ where each $b_i \approx \langle \veca_i, \vecx^*\rangle$ for a hidden parameter vector $\vecx^* \in \mathbb{R}^\slrcolumns$.  When $\slrcolumns \gg \slrrows$, the model is unidentifiable; however, in various application domains (such as the ones above), it makes sense to require that $\vecx^*$ is sparse, that is, it has only a few non-zero coordinates. The resulting problem is called {\em sparse linear regression} which can be formulated as finding
$$ \widehat{\vecx} \in \argmin_{\vecx\in \mathbb{R}^\slrcolumns} \|\vecb-\matA\vecx\|_2 \hspace{.2in}\mbox{ such that } \|\vecx\|_0 \leq k$$
where $\matA \in \mathbb{R}^{\slrrows \times \slrcolumns}$ is the design matrix with $\veca_i$ as its rows, $\vecb \in \mathbb{R}^{\slrrows\times 1}$ is the response vector consisting of the $b_i$'s as entries, and $k$ is the sparsity parameter.

This problem is non-convex and finding an exact as well as approximate solution is known to be $\mathcal{NP}$-hard~\cite{natarajan1995sparse, davis1997adaptive, foster2015variable}.
Despite this negative result, when the design matrix $\matA$ is particularly well-conditioned, polynomial-time algorithms based on $\ell_1$-relaxation, such as basis pursuit and Lasso estimators~\cite{tibshirani,CDS98} as well as the Dantzig selector~\cite{CT07} have been shown to meet the  minimax-optimal lower bound for sparse linear regression. This is also the setting of compressed sensing, which has seen much exciting work in the last two decades.

However, in several high-dimensional regression problems in data science, there is no such guarantee on the design matrix $\matA$, as the features could be correlated. As a result, the problem formulation with general design matrices is also of great interest, and is the main focus of this paper.

On the one hand, the NP-hardness of this problem \cite{natarajan1995sparse, davis1997adaptive, foster2015variable} rules out algorithms that run in time polynomial in $\slrrows, \slrcolumns$ and $k$. On the other hand, the best known algorithm for this problem is trivial brute-force search that runs in time $O(\slrcolumns^{k})$. Concretely, one can enumerate over all possible $\binom{\slrcolumns}{k}$ supports, and run an ordinary least squares regression with an $\slrrows \times k$  design matrix, which results in a $\slrcolumns^k (\slrrows + k)^{O(1)}$ time algorithm \cite{har2016approximate}. \anote{Say that we do not care about the $M$ and $k$ factors.} Improving this to, say, $O(\slrcolumns^{0.999k})$ or even a modest $O(\slrcolumns^{k-1})$ seems out of reach. This state of affairs motivates our question:
\begin{quote}
    Can $k$-sparse linear regression be solved in time $\slrcolumns^{(1-\epsilon)k}$ for {\em some} constant $\epsilon>0$?
\end{quote}
Our main result in this paper shows that the answer is ``no'' assuming popular conjectures from the field of fine-grained algorithms and complexity~\cite{williams2018some}.

First, we show that it is hard to solve $k$-SLR in the $\ell_2$ norm faster than $O(\slrcolumns^{k-\epsilon})$ for any constant $\epsilon>0$, assuming either the min-weight $k$-clique hypothesis~\cite{williams2018some} or the conjectured hardness of the lattice closest vector problem~\cite{bennett2017quantitative}. 
The min-weight $k$-clique problem asks to find a clique with $k$ vertices with the smallest total edge weight, in a weighted undirected graph. The best known algorithm is exhaustive search which runs in time $O(\slrcolumns^k)$ on $N$-node graphs~\cite{williams2018some}. The closest vector problem is a long-studied problem in the geometry of numbers~\cite{Minkowski,MGbook}, which asks to find the closest lattice point to a given target vector, given an arbitrary $n$-dimensional integer lattice. The best known algorithm runs in time $O(2^n)$~\cite{ADS15}, and improving it is a long-standing challenge with considerable impact in the area of combinatorial optimization and lattice-based cryptography~\cite{bennett2017quantitative}.
(For a more formal description of these problems, we refer the reader to Section~\ref{sec:defs}).

Secondly, we show that it is hard to solve $k$-SLR in $\ell_p$ norms (including $p=1$ and odd $p>2$) faster than $O(\slrcolumns^{(1-\epsilon)k})$ under the strong exponential-time hypothesis (SETH)~\cite{impagliazzo2001complexity}. The strong exponential-time hypothesis, formulated by Impagliazzo and Paturi, conjectures that the satisfiability problem (SAT), the cornerstone of computer science, cannot be sped up significantly. More precisely, SETH states that for every $\epsilon>0$, there is an $k\in \mathbb{N}$ such that $k$-SAT on $n$ variables cannot be solved in time $2^{(1-\epsilon)n}$.

These results together settle the exact complexity of the $k$-sparse linear regression problem in the worst-case, assuming popular conjectures in fine-grained complexity and the geometry of numbers. Under these conjectures, there are {\em no better-than-brute-force} algorithms for sparse linear regression.



\paragraph{A Window into our Techniques: Translating Integrality into Sparsity.} 
The hardness of problems such as the closest vector problem are closely connected to integer linear programming, where hardness arises from the fact that one has to come up with {\em integer solutions} to an optimization problem. On the other hand, there is no such integrality condition in $k$-sparse linear regression; rather, the hardness seems to arise from the fact that we insist on {\em sparse} solutions. Thus, a key technical contribution of our work is showing several ways to map two apparently different sources of hardness, namely integrality and sparsity. 

In the rest of the paper, we provide a self-contained exposition of our results. We start with definitions in Section~\ref{sec:defs}, and proceed to show the reduction from the min-weight $k$-clique problem to sparse linear regression in Section~\ref{sec:min-k-clique-slr} and the reduction from the lattice closest vector problem in Section~\ref{sec:cvp-slr}. Finally, in Section~\ref{sec:seth2slr}, we also show that there are no sub-quadratic $\slrcolumns^{2-\epsilon}$-time algorithms for $k$-SLR when $k=2$, assuming the strong exponential-time hypothesis.

\subsection{Related work}\label{sec:related-work}
\paragraph{Lower bounds.} Natarajan~\cite{natarajan1995sparse} showed the $\mathcal{NP}$-hardness of solving sparse linear regression, which rules out $(\slrcolumns k)^c$-time algorithms for any absolute constant $c$. Har-Peled, Indyk and Mahabadi~\cite{har2016approximate} showed the hardness of solving $k$-SLR faster than $O(\slrcolumns^{k/2})$ time, under the well-known $k$-SUM conjecture from fine-grained complexity. However, both results left a gap between the best known upper bound which is brute-force search that runs in $\slrcolumns^k$ time, and a lower bound of $\slrcolumns^{k/2}$. 

\paragraph{Algorithms.}
Har-Peled, Indyk and Mahabadi~\cite{har2016approximate} also give an algorithm for {\em approximate} sparse linear regression {\em with pre-processing} that runs in time slightly better than brute force, namely in time $\tilde{O}(\slrcolumns^{k-1})$. Gupte and Lu~\cite{guptelu2020fine-grained} showed a slight improvement to the standard $k$-SLR problem in the {\em noiseless} setting, namely they showed an $O(\slrcolumns^{k-1})$ time algorithm. They also give a $\slrcolumns^{\Omega(k)}$-time lower bound in the noiseless setting, under the Exponential Time Hypothesis. In contrast to both these works, our result shows that even such a modest improvement is impossible to achieve for the standard and well-studied $k$-SLR problem. That is, we show that $k$-SLR cannot be solved in time $O(\slrcolumns^{k-\epsilon})$ for any constant $\epsilon > 0$, under the popular min-weight $k$-clique conjecture~\cite{williams2018some} from fine-grained complexity.

\paragraph{Compressed Sensing and the LASSO.}
While the focus of this paper is the worst-case complexity of sparse linear regression, its {\em average-case} complexity has been the object of much study. Methods based on $\ell_1$-relaxation, such as basis pursuit and Lasso estimators~\cite{tibshirani,CDS98} as well as the Dantzig selector~\cite{CT07} have been shown to meet the  minimax-optimal lower bound in some cases. In particular, they focus on the setting where $\vecb = \matA \vecx + \vecw$ where the noise vector $\vecw \sim \mathcal{N}(0,\sigma^2)$ comes from a Gaussian distribution. While these methods perform well when the design matrix $\matA$ is well-conditioned in an appropriate sense, Zhang, Wainwright and Jordan~\cite{zhang2014lower} show that such a condition is necessary unless $\mathcal{NP} \subseteq \mathcal{P}/\mathsf{poly}$. Recently, and concurrently with this work, Kelner et al~\cite{kelnerpreconditioning} show random-design instances of sparse linear regression that are hard for a particular class of algorithms, namely preconditioned Lasso. In contrast, we show hardness for {\em worst-case} design matrices, but with respect to {\em any algorithm} that runs faster than trivial brute force search.

\medskip\noindent
Further afield, the areas of fine-grained complexity and the geometry of numbers (or the study of integer lattices) have recently been shown to have interesting connections to statistical problems. For example, Backurs, Indyk and Schmidt~\cite{BIS17} show that certain empirical risk minimization (ERM) problems cannot be solved in sub-quadratic time, under the SETH conjecture. In another result, Backurs and Tzamos~\cite{backurs2017improving} showed that the classical Viterbi algorithm for finding the most likely path in a Hidden Markov Model is in fact optimal under the min-weight $k$-clique hypothesis which we rely on in this work. Bruna, Regev, Song and Tang~\cite{BRST20} recently showed the hardness of problems in robust statistics and learning mixtures of Gaussians, assuming the hardness of lattice problems.



\section{Preliminaries and definitions}\label{sec:defs}

\paragraph{Notations.} Bold lower-case letters, such as $\vecx$, denote (column) vectors. Bold uppercase letters, such as $\matA$, denote matrices. $\mathbb{R}$ stands for the set of real numbers, $\mathbb{Z}$ for the set of integers, and $\mathbb{R}^+$ (resp. $\mathbb{Z}^+$) for the set of non-negative real numbers (resp. integers). We will let $\|\vecx\|_p$ denote the $\ell_p$ norm of $\vecx \in \mathbb{R}^\slrcolumns$, that is, $\|\vecx\|_p = \big( \sum_{i\in [\slrcolumns]} |x_i|^p \big)^{1/p}$. The Hamming weight of $\vecx$, also called its $\ell_0$ norm,  is simply the number of non-zero coordinates of $\vecx$. A vector $\vecx \in \R^\slrcolumns$ is called $k$-sparse if its Hamming weight is at most $k$, that is, if $\| \vecx\|_0 \leq k$.

\subsection{Sparse linear regression} 

We now define the sparse linear regression problem. 

\begin{definition}[$k$-SLR$_p$]
For any integer $k \ge 2$ and $1 \le p \le \infty$, the $k$-sparse linear regression problem with respect to the $\ell_p$ norm is defined as follows.
Given a matrix $\matA \in \R^{\slrrows \times \slrcolumns}$, a target vector $\vecb \in \R^{\slrrows}$, and a number $\delta > 0$, distinguish between:
\begin{itemize}
    \item a {\em \textbf{YES}} instance, where there is some $k$-sparse $\vecx \in \R^\slrcolumns$ such that $\| \matA \vecx - \vecb \|_p \le \delta$; and 
    \item a {\em \textbf{NO}} instance, where for all $k$-sparse $\vecx \in \R^\slrcolumns$, $\| \matA \vecx - \vecb \|_p > \delta$.
\end{itemize}
\end{definition}

$k$-sparse linear regression can be trivially solved in time $\slrcolumns^{k}$. In this work, we are interested in determining if there are significantly faster algorithms. We will sometimes also give the algorithms more power, by allowing them unbounded time to preprocess the design matrix $\matA$ in an ``offline'' phase, and require them to be fast only in an ``online'' phase where they are given $\vecb$. We call this variant {\em sparse linear regression with preprocessing}.

\subsection{Fine-grained complexity and its hypotheses}\label{sec:assumptions}

Fine-grained complexity is the study of exact runtimes of algorithms for various problems. We present the relevant problems and results here, and refer the reader to the survey by Vassilevska-Williams \cite{williams2018some} for a detailed exposition.

\paragraph{$k$-SAT and the SETH Hypothesis.}
A central problem in fine-grained complexity is the study of the satisfiability problem, and the associated strong exponential-time hypothesis.
Given a CNF-SAT formula on $n$ variables with each clause having size at most $k$, the $k$-SAT problem asks to determine whether the formula is satisfiable or not.

Impagliazzo and Paturi \cite{impagliazzo2001complexity} introduced the strong exponential time hypothesis, conjecturing the fine-grained hardness of $k$-SAT.

\begin{definition}[Strong Exponential Time Hypothesis (SETH)]
For every $\eps > 0$ there exists an integer $k \ge 3$ such that no (randomized) algorithm can solve $k$-SAT on $n$ variables in $2^{(1 - \eps) n}$ time.
\end{definition}

We also use the non-uniform version of SETH to show hardness of sparse linear regression with preprocessing.

\begin{definition}[Non-uniform SETH]
For every $\eps > 0$ there exists an integer $k \ge 3$ such that no family of circuits of size $2^{(1 - \eps)n}$ solves $k$-SAT on $n$ variables.
\end{definition}

\paragraph{The Weighted $k$-Clique Conjecture.}
Suppose we are given an undirected graph $G = (V, E)$ with $|V| = \slrcolumns$ nodes. For a constant $k \ge 3$, a $k$-clique of $G$ is a fully connected subgraph of $G$ of size $k$. The $k$-clique problem asks to distinguish between \textbf{YES} instances, where $G$ contains a $k$-clique, and \textbf{NO} instances, where $G$ does not contain any $k$-clique. Given an {\em edge-weighted} graph and a threshold weight $W$, the Min-Weight $k$-Clique problem asks to distinguish between $\textbf{YES}$ instances  which are graphs $G$ that contain a $k$-Clique of weight at most $W$, and \textbf{NO} instances where $G$ does not contain any $k$-clique of weight at most $W$.  

\begin{definition}[Min-weight-$k$-clique]
Given a graph $G$ with integer edge weights and a weight threshold $W$, determine whether $G$ has a $k$-clique with weight at most $W$ or not.
\end{definition}

The naive algorithm for both problems requires enumerating all $\binom{\slrcolumns}{k}$ subsets of vertices of size $k$, and this takes time $O(\slrcolumns^k)$. In the case of the $k$-clique conjecture, slightly faster algorithms that run in time $O(\slrcolumns^{\omega k/3})$ are known \cite{nevsetvril1985complexity}, where $\omega < 2.373$ is the matrix multiplication constant.
However, no non-trivial algorithms are known for the weighted $k$-clique problem.  It is conjectured that the problem of finding the minimum weight $k$-clique requires time $\slrcolumns^{k - o(1)}$ \cite{williams2018some}.

Under the min-weight $k$-clique hypothesis, Backurs and Tzamos~\cite{backurs2017improving} showed that the classical Viterbi algorithm for finding the most likely path in a Hidden Markov Model is in fact optimal. Under this hypothesis, tight lower bounds are also known for the the local alignment problem \cite{abboud2014consequences} from computational biology and the Maximum Weight Rectangle problem \cite{backurs2016tight} from computational geometry.

\begin{definition}[Min-weight $k$-clique hypothesis]
    The Min-Weight $k$-Clique problem on $\slrcolumns$ node graphs with edge weights in $\{-\slrcolumns^{100k}, \ldots, \slrcolumns^{100k} \}$ requires (randomized) $\slrcolumns^{k - o(1)}$ time.
\end{definition}



We note that the following variants of the min-weight $k$-clique problem are as hard as the original one. First, we can assume that the edges are non-negative and in the range $\{1, \ldots, 2\slrcolumns^{100k} + 1\}$. To see this, let $w^*$ be the minimum weight of the edges in a graph. Create a new graph $G'$ with the same vertex and edge sets. The edge weights of $G'$ are obtained by adding $|w^*| + 1$ to the weight of the corresponding edge in $G$. Then $G'$ has a $k$-clique of weight at most $W + \binom{k}{2} (|w^*| + 1)$, if and only if $G$ has a $k$-clique of weight at most $W$.

Secondly, we can assume that there is a partition of the vertex set into $k$ subsets such that the $k$-clique has one vertex from each subset.
To see this, suppose there is a $k$-clique $K$ in $G$ with weight at most $W$. Then, we claim that we can assume that the vertex set $V$ is partitioned into $k$ subsets $V = V_1 \cup \ldots, V_k$ such $|K \cap V_i| = 1$ for all $i \in \{1, \ldots, k\}$.
To see this, randomly partition the vertices into $k$ subsets, by assigning each vertex to a subset uniformly and independently at random. Then, for any given $k$-clique (and in particular, the min-weight one), the probability that it has exactly one vertex in each subset is $\prod_{i=0}^{k-1} \left( \frac{k - i}{k} \right) = \frac{k!}{k^k}$, which is at least $\sqrt{2 \pi k} \cdot e^{-k} \cdot e^{\frac{1}{12k + 1}} > e^{-k}$ by Stirling's formula. Repeating this $O(e^k)$ many times completes the argument. Note that although the number of repetitions is exponential in $k$, it is independent of $\slrcolumns$, and so is sufficient for our purposes.

\subsection{The closest vector problem}
A lattice $\mathcal L$ is the set of the integer linear combinations of a set of linearly independent vectors (a basis) $\matB = (\vecb_1, \vecb_2, \ldots, \vecb_{\cvpcolumns})$ where each $ \vecb_i \in \R^{\cvprows}$:
\[\mathcal L = \mathcal L(\matB) = \left\{ \matB \vecz \mid \vecz \in \Z^{\cvpcolumns} \right\}.\]
Given a lattice basis $\matB$, and a target vector $\vect$, define $\dist_p(\mathcal L(\matB), \vect) = \min_{\vecx \in \mathcal L(\matB)} \|\vecx - \vect \|_p$ to be the minimum $\ell_p$ distance from $\vect$ to $\mathcal L = \mathcal L(\matB)$. 



We follow \cite{bennett2017quantitative} for definitions of lattice problems. 

\begin{definition}[$\gamma$-approximate Closest Vector Problem with respect to the $\ell_p$ norm ($\gamma$-$\CVP_p$)]
For any $\gamma \ge 1$ and $1 \le p \le \infty$, given a lattice $\mathcal L$ (specified by its basis $\matB \in \R^{\cvprows \times \cvpcolumns}$), a target vector $\vect \in \R^n$ and a number $r > 0$, distinguish between a \textbf{YES} instance, where $\dist_p(\mathcal L, \vect) \le r$ and a \textbf{NO} instance, where $\dist_p(\mathcal L, \vect) > \gamma r$.
\end{definition}

We refer to the exact version of the problem, when $\gamma = 1$, as $\CVP_p$. 

The best known algorithm for exact CVP$_2$ runs in time $2^{\cvpcolumns + o(\cvpcolumns)}$ time \cite{ADS15}. However, for $p \neq 2$, the best known algorithms for CVP$_p$ are slower, i.e., they run in time $\cvpcolumns^{O(\cvpcolumns)}$ \cite{Kannan83, micciancio2014fast}.
Bennett, Golovnev and Stephens-Davidowitz~\cite{bennett2017quantitative} proved an SETH-based hardness bound for the Closest Vector Problem. In particular, for many $p$, including odd integers $p \ge 1$ and $p = \infty$, they show that $\CVP_p$ cannot be solved in time $2^{(1 - \eps) \cvpcolumns}$ for any $\eps > 0$, unless the Strong Exponential Time Hypothesis (SETH) is false.

\begin{theorem}[SETH-hardness of CVP$_p$ \cite{bennett2017quantitative}]\label{thm:sethcvp}
For every $\eps > 0$ and every integer $p \ge 1, p \notin 2\Z$ there is no $2^{(1 - \eps)\cvpcolumns}$ algorithm for CVP$_p$, unless SETH is false.
\end{theorem}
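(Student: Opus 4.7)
The plan is to give a rank-preserving polynomial-time reduction from $k$-SAT on $n$ variables to exact CVP$_p$ on a lattice of rank $\cvpcolumns = n + o(n)$; such a reduction converts any $2^{(1-\epsilon)\cvpcolumns}$-time algorithm for CVP$_p$ into a $2^{(1-\epsilon')n}$-time algorithm for $k$-SAT for every fixed $k$, falsifying SETH. To keep the dimension blowup under control, I would first apply the Impagliazzo--Paturi--Zane sparsification lemma and reduce to $k$-SAT instances with only $m = O(n)$ clauses, so that the clause gadgets constructed below contribute only an additive $O(n)$ to the ambient dimension without increasing the rank.

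The basis $\matB$ will have one column $\vecb_i$ per Boolean variable and will live in a space partitioned into two coordinate blocks. The first, ``Boolean-forcing'' block places a heavily scaled identity in the columns of $\matB$ and sets the corresponding target entries to $1/2$, with the scaling chosen large enough that any integer coefficient $z_i \notin \{0,1\}$ forces $\|\matB\vecz - \vect\|_p$ past the final threshold $r$. This restricts attention to $\vecz \in \{0,1\}^n$, in bijection with Boolean assignments to $\phi$. The second block is then built clause by clause, and its design is the heart of the proof.

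For each clause $C_j$ I need a small number of coordinates on which the contribution to $\|\matB\vecz - \vect\|_p^p$ takes one value whenever $C_j$ is satisfied and a strictly larger value otherwise, \emph{independently} of \emph{how many} literals of $C_j$ are true. The naive attempt --- a single coordinate whose $\vecb_i$-entries are $\pm 1$ according to the polarity of $x_i$ in $C_j$, with a constant shift in $\vect$ --- fails for even $p$, because $t \mapsto |t|^p$ is too smooth and the many distinct ``satisfied'' sums cannot be separated from some ``unsatisfied'' sum by a single threshold. For odd integer $p$ (and for $p = \infty$) the kink of $|t|^p$ at $t=0$ can be exploited: combining $O(1)$ such coordinates per clause with carefully tuned integer offsets in $\vect$, one builds an ``isolating'' gadget whose $\ell_p^p$-contribution is exactly some $\alpha_p$ on every satisfying assignment and at least $\alpha_p + 1$ on every falsifying one. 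Designing these gadgets uniformly across clauses, and checking that the global threshold $r$ decouples the Boolean and clause blocks cleanly so that meeting $r$ requires \emph{both} $\vecz \in \{0,1\}^n$ \emph{and} every clause satisfied, is the main technical obstacle; once it is resolved, the bit-complexity and dimension bookkeeping are routine and the SETH contradiction follows as stated.
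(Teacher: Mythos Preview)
The paper does not prove this theorem; it is quoted from \cite{bennett2017quantitative} and invoked as a black box (indeed, the paper actually relies on the slightly stronger $(0,1)$-$\CVP_p$ statement implicit in that work). There is thus no in-paper proof to compare against.

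That said, your sketch is essentially the Bennett--Golovnev--Stephens-Davidowitz strategy: one lattice generator per Boolean variable, a heavily weighted diagonal block pinning coefficients to $\{0,1\}$, and a constant number of ``isolating'' coordinates per clause whose $\ell_p^p$ contribution is identical across all satisfying local patterns and strictly larger on the unique falsifying one. Two minor corrections. First, the rank there is exactly $n$, not $n+o(n)$; sparsification controls the ambient dimension and bit-length, not the rank. Second, attributing the odd-versus-even-$p$ dichotomy to ``the kink of $|t|^p$ at $t=0$'' is misleading for $p\ge 3$, where $|t|^p$ is $C^{p-1}$. The real obstruction is algebraic: for even $p$ the map $\vecx\mapsto\|V\vecx-\vect\|_p^p$ is a genuine polynomial in $\vecx$, and one can show no constant-row $V,\vect$ yields the required isolating behavior, whereas for odd $p$ the absolute values break this polynomial structure and explicit two-row gadgets (with carefully chosen real, not $\pm 1$, entries) exist. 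You are right to flag this gadget construction as the crux you are not reproducing; it is the nontrivial content of the cited paper.
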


CVP with preprocessing (CVPP) refers to algorithms for the closest vector problem which are allowed an unbounded-time offline preprocessing phase where they get the basis $\matB$, and are required to be fast only in an online phase where they receive the target vector $\mathbf{t}$ and the distance threshold $r$. Aggarwal, Bennett, Golovnev and Stephens-Davidowitz show the following hardness results for CVPP$_p$.

\begin{theorem}[\cite{aggarwal2021fine}]
For every $\eps > 0$ and every integer $p \ge 1, p \notin 2\Z$ there is no $2^{(1 - \eps)\cvpcolumns}$ algorithm for CVPP$_p$, unless non-uniform SETH is false. 
\end{theorem}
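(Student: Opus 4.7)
The plan is to adapt the SETH-to-CVP$_p$ reduction of Bennett--Golovnev--Stephens-Davidowitz (Theorem~\ref{thm:sethcvp}) so that the lattice basis $\matB$ produced by the reduction depends \emph{only} on the parameters $n,k$ of the $k$-SAT instance (number of variables and clause width) and not on the specific clauses. Once this is achieved, a CVPP$_p$ algorithm with preprocessing time $T_{\text{pre}}$ and online time $2^{(1-\eps)\cvpcolumns}$ immediately yields, via non-uniform advice, a circuit family of size $2^{(1-\eps')n}$ solving $k$-SAT on $n$ variables: the preprocessing on the universal $\matB$ is hardwired as the advice string for input length $n$, and the online phase is evaluated on the target vector computed from the specific formula.

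Concretely, I would proceed as follows. First, recall the BGS reduction at a high level: a $k$-SAT formula $\varphi$ on $n$ variables and $m$ clauses is encoded by a ``gadget'' lattice whose short vectors are in correspondence with partial assignments, together with clause coordinates that penalize unsatisfied clauses; the target vector $\vect$ offsets these coordinates so that $\dist_p(\mathcal L,\vect)\le r$ iff $\varphi$ is satisfiable. Second, identify the parts of $\matB$ that depend on $\varphi$: these are precisely the columns/coordinates associated to the $m$ clauses of $\varphi$. Third, replace this ``clause-specific'' part by a universal gadget that includes a coordinate for \emph{every} possible $k$-clause over $n$ variables, of which there are only $N:=2^k\binom{n}{k}=\poly(n)$. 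The target vector $\vect$ is then set so that its entries on the clauses present in $\varphi$ are as in the original reduction, and its entries on the absent clauses are set to a ``neutral'' value (e.g., zero with an appropriate slack in $r$) that does not affect the closest-vector distance. Finally, verify that the resulting lattice still has dimension $\cvpcolumns=n+O(\log n)$ so that $2^{(1-\eps)\cvpcolumns}=2^{(1-\eps')n}$, preserving the SETH-tight exponent.

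The main obstacle is the third step: designing the universal clause gadget so that (i) including $\poly(n)$ clause-coordinates does not inflate the lattice dimension beyond $n+o(n)$ (otherwise the exponent is lost), and (ii) the ``neutral'' encoding of absent clauses in $\vect$ does not open up spurious short-vector solutions. To handle (i), note that the lattice dimension counts only the number of basis vectors, not the ambient dimension; we can give each new clause-coordinate as an \emph{ambient} coordinate with no corresponding basis vector, so $\cvpcolumns$ is unchanged. For (ii), one can set the $\ell_p^p$ contribution from an absent clause's coordinate to exactly the same value regardless of the assignment by placing $\vect$ on the all-zero vector there and relying on the fact that the lattice has no component in these coordinates, making them a fixed additive constant to be absorbed into the threshold $r$. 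The remaining bookkeeping -- handling the approximation/threshold arithmetic in the $\ell_p^p$ norm, and dealing with odd $p$ via the standard BGS locally-dense-lattice construction -- carries over verbatim. Assembling these pieces gives the desired non-uniform SETH lower bound for CVPP$_p$ at all odd integer $p\ge 1$.
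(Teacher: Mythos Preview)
First, note that the paper does not give its own proof of this statement: it is quoted as a theorem of Aggarwal, Bennett, Golovnev, and Stephens-Davidowitz~\cite{aggarwal2021fine} and used as a black box. There is therefore no in-paper argument to compare your proposal against.

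On the merits, your high-level plan---make the BGS lattice basis $\matB$ depend only on $(n,k)$ by including an ambient coordinate for every one of the $2^k\binom{n}{k}$ possible $k$-clauses, push all formula dependence into $\vect$, and hard-wire the preprocessed $\matB$ as non-uniform advice---is the right skeleton and is indeed the strategy of~\cite{aggarwal2021fine}. However, your resolution of obstacle~(ii) contains a genuine gap. You assert that for absent-clause coordinates one can set $\vect$ to zero ``relying on the fact that the lattice has no component in these coordinates.'' But in the universal construction the lattice \emph{does} have a component in every clause coordinate: the basis column for variable $x_i$ carries a nonzero entry in the row of every possible clause mentioning $x_i$, whether or not that clause actually appears in $\varphi$. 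You seem to be conflating ``the clause row has no dedicated basis \emph{column}'' (which is what you correctly argued for~(i)) with ``the basis columns have zero \emph{entries} in that row'' (which is false). Consequently the contribution of an absent-clause row to $\|\matB\vecz-\vect\|_p^p$ depends on how many of that clause's literals the assignment $\vecz\in\{0,1\}^n$ happens to satisfy; it is not a fixed additive constant for any single choice of target value there, and certainly not for target~$0$.

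Making the absent clauses neutral is precisely the step that separates the CVPP lower bound from the CVP one, and it is not bookkeeping that ``carries over verbatim.'' One must either exhibit a target value on inactive clause rows for which the BGS isolating-parallelepiped gadget yields the same $\ell_p^p$ cost across all $\{0,1\}^k$ patterns, or modify the gadget so that such a value exists; either way this requires reopening the gadget analysis for odd~$p$ rather than quoting it. Your proposal identifies the correct architecture but waves away the one step that carries the new content.
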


Examining their proof more carefully, we notice that they prove a stronger result. In particular, they show SETH-hardness of the following problem, which we refer to as $(0,1)$-$\CVP_p$.

\begin{definition}[$(0,1)$-$\CVP_p$]
For any $1 \le p \le \infty$, the $(0,1)$-Closest Vector Problem with respect to the $\ell_p$ norm is the following promise problem. Given a lattice $\mathcal L$ specified by a basis $\vec B \in \R^{\cvprows \times \cvpcolumns}$, a target vector $\vec t \in \R^\cvprows$ and two numbers $r, \tau > 0$, distinguish between the following two cases:

\begin{itemize}
    \item \textbf{YES} instances, where there exists some $\vec {y}^* \in \{0, 1\}^\cvpcolumns$ such that $\|\vec{B} \vec{y}^* - \vec {t}\| \le r$; and
    \item \textbf{NO} instances, where for all $\vec y \in \Z^\cvpcolumns$, $\| \vec B \vec y - \vec t\| \ge r + \tau.$
\end{itemize}
\end{definition}

Their method does not show hardness of $\CVP_2$ (or $(0,1)$-$\CVP_2$). On the other hand, to the best of our knowledge, there is no $2^{(1- \epsilon) \cvpcolumns}$-time algorithm for $(0, 1)$-CVP$_2$ for any $\epsilon > 0$. Indeed, as we observe in Section~\ref{sec:max-cut}, such an algorithm would imply a better algorithm for Weighted Max-Cut.


\section{Optimal worst-case hardness of sparse linear regression under the min-weight $k$-clique hypothesis}\label{sec:min-k-clique-slr}

The main result of this section is the theorem below, which shows that there are no {\em better-than-brute-force} algorithms for worst-case sparse linear regression, unless the Min-weight $k$-clique conjecture is false.



\begin{theorem}\label{thm:weightedclique}
For any integer $k \ge 4$, the $k$-SLR problem in the Euclidean ($\ell_2$) norm requires $\slrcolumns^{k - o(1)}$ (randomized) time, unless the Min-Weight $k$-clique conjecture is false.
\end{theorem}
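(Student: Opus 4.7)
The plan is a direct reduction from Min-Weight $k$-Clique on $N$-vertex graphs to $k$-SLR$_2$ with $\slrcolumns = N$ columns and $O(Nk)$ rows, so that an $O(\slrcolumns^{k-\epsilon})$-time SLR algorithm would yield an $O(N^{k-\epsilon})$-time min-weight $k$-clique algorithm. Using the normalizations recorded in Section~\ref{sec:assumptions}, I can assume the input graph has non-negative integer edge weights $w_{uw}$ bounded by $N^{O(k)}$ together with a $k$-partition $V = V_1 \sqcup \cdots \sqcup V_k$ such that the minimum clique contains exactly one vertex per part; the $O(e^k)$-factor overhead of the random partitioning step is absorbed into the final runtime since $k$ is constant.

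The SLR design matrix has one column per vertex and two kinds of rows. For each part $V_i$ I add a \emph{selection row} with value $M$ in every column of $V_i$, zero elsewhere, and target $M$, where $M = N^{\Theta(k)}$ is a large parameter; this row contributes $M^2\bigl(\sum_{v \in V_i} x_v - 1\bigr)^2$ to the squared error, and combined with $\|x\|_0 \le k$ (equal to the number of parts) it forces the minimizer to place exactly one non-zero of value close to $1$ in each part. For each pair $(i,j)$ with $i<j$ I add an \emph{edge block} of $|V_i|+1$ rows implementing an inner-product factorization of the weight submatrix: associate to $u \in V_i$ the vector $A_u := e_u \in \R^{|V_i|}$ (padded with a trailing zero) and to $w \in V_j$ the vector $B_w := \bigl((w_{\ell w})_{\ell \in V_i},\,\sqrt{C_{ij} - \sum_\ell w_{\ell w}^2}\bigr)$ with $C_{ij} := \max_{w \in V_j} \sum_\ell w_{\ell w}^2$. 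By construction, $\|A_u\|_2^2 = 1$, $\|B_w\|_2^2 = C_{ij}$, and $\langle A_u, B_w\rangle = w_{uw}$ for all $u \in V_i,\, w \in V_j$. The $\ell$-th row of the block places $(A_u)_\ell$ in column $u \in V_i$, $(B_w)_\ell$ in column $w \in V_j$, zero in every other column, and has target $0$.

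On any honest $k$-sparse solution activating $u_i^* \in V_i$ with value $1$, every selection row has residual $0$, and the $(i,j)$-block contributes exactly $\|A_{u_i^*} + B_{u_j^*}\|_2^2 = 1 + C_{ij} + 2 w_{u_i^* u_j^*}$; summing over pairs yields squared error $\alpha + 2W^*$, where $\alpha := \sum_{i<j}(1 + C_{ij})$ is independent of the chosen clique and $W^* := \sum_{i<j} w_{u_i^* u_j^*}$ is its weight. Setting the SLR threshold $\delta$ so that $\delta^2 = \alpha + 2W + 1$ then separates YES instances of min-weight $k$-clique (optimal SLR squared error $\le \alpha + 2W < \delta^2$) from NO instances (where every honest selection has $W^* \ge W+1$ and hence squared error $\ge \alpha + 2W + 2 > \delta^2$).

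The main obstacle is the continuous slack of SLR: a $k$-sparse $x$ may deviate from the honest $0/1$ structure, trading off selection penalty against a change in edge-block cost. My plan is to take $M = N^{\Theta(k)}$ polynomially large enough that any deviation of $\ell_2$-magnitude $\eta$ from an honest configuration costs $\Omega(M^2\eta^2)$ in the selection rows while perturbing the edge-block contribution by at most $\mathrm{poly}(N) \cdot \eta$; the best slack is then $\eta = O(\mathrm{poly}(N)/M^2)$ and shifts the squared error by at most $O(\mathrm{poly}(N)/M^2)$, far below the integer gap of $2$ in $2W^*$. Moreover, any $x$ with some part having $\sum_{v \in V_i} x_v$ bounded away from $1$ (in particular an empty part) already incurs an $\Omega(M^2)$ penalty, swamping $\delta^2$. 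Preserving the YES/NO gap this way, a hypothetical $O(\slrcolumns^{k-\epsilon})$ algorithm for $k$-SLR$_2$ would solve min-weight $k$-clique in $O(N^{k-\epsilon})$ time, contradicting the hypothesis and establishing the $\slrcolumns^{k-o(1)}$ lower bound.
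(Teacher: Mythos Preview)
Your approach is correct and shares the paper's core architecture: $k$ heavily weighted selection rows (your $M$, the paper's $\beta$) force any near-optimal $k$-sparse $x$ to place exactly one coordinate close to $1$ in each part, and the remaining rows make the honest squared error an affine function of the selected clique's weight. Your continuous-slack argument---trading an $M^2\|\eta\|^2$ selection penalty against an $O(\mathrm{poly}(N)\cdot\|\eta\|)$ change in edge cost---is the abstract version of the paper's explicit interval calculation $x_u^*\in[1-2\delta/\beta,\,1+2\delta/\beta]$ followed by a term-by-term bound.

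Where you diverge is in the weight encoding. The paper dedicates one row to \emph{every} unordered vertex pair: for an edge $e=(u,v)$ the row has $2\sqrt{w_e}$ in columns $u,v$ and target $\sqrt{w_e}$, so that picking both endpoints costs $9w_e$ versus $w_e$ otherwise (a net $+8w_e$); non-edges get their own rows scaled by a large parameter $\alpha$, making selection of both endpoints prohibitive. This yields $\binom{N}{2}+k$ rows and an $O(N^3)$ reduction. Your inner-product factorization $\langle A_u,B_w\rangle=w_{uw}$ per block pair is more compact ($O(Nk)$ rows) and, by treating $w_{uw}$ as defined for all cross-part pairs, folds the non-edge constraint into the weights themselves. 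You should state explicitly that non-edges are assigned a weight larger than $W$ (so that any $k$-set containing a non-edge is automatically too heavy), since Section~\ref{sec:assumptions} does not assume the graph is complete $k$-partite. With that one-line fix, either encoding produces the same YES/NO gap and the same $N^{k-o(1)}$ conclusion; your version has the mild advantage of a smaller design matrix.
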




\begin{proof}
We give a reduction from any Min-weight $k$-clique instance $G = (V, E)$ with positive edge weights to a $k$-SLR$_2$ instance $(\matA, \vecb, \delta)$ as follows. 
First, we set large numbers $\alpha$ and $\beta$, whose significance will be clear in the sequel:
\[\alpha = \sqrt{\max \left\{ 1, \sum_{e \in E} w_e + 8W \right\}}, \quad \beta = \sqrt{\sum_{e \in E} w_e + 8W + \alpha^2 Z} \cdot \max\left\{8 Z, 50 \left(\alpha^2 Z +\sum_{e \in E} w_e \right) \right\},\]
where we define $Z  = \left| \{ (u, v) \notin E \mid u, v \in V \} \right|$ to be the number of non-edges in the graph $G$.

We enumerate the unordered pairs of vertices $(u_{i_1}, u_{i_2})$ with index $i \in \left[ \binom{\slrcolumns}{2} \right]$ where $u_{i_1}, u_{i_2} \in V$. Let edge $e$ have weight $w_e$.
We first describe how we construct $\matA, \vecb$ which have the following structure:
\[
\matA = 
\begin{pmatrix}
\matC\\
\matD
\end{pmatrix}, \quad
\vecb =
\begin{pmatrix}
\vecc\\
\vecd
\end{pmatrix}
\]

The submatrix $\matC \in \R^{\binom{\slrcolumns}{2} \times \slrcolumns}$ contains one row for each unordered pair of vertices and one column for each vertex in $V$. This submatrix $\matC$, along with the target subvector $\vecc \in \R^{\binom{\slrcolumns}{2}}$, encodes the $k$-clique instance. The submatrix $\matD \in \R^{k \times \slrcolumns}$ and its corresponding target vector $\vecd \in \R^k$ ensure that any solution must have exactly $k$ non-zero entries, all of which are close to $1$.

For each unordered pair of vertices $(u_{i_1}, u_{i_2}) \notin E$ corresponding to a non-edge, we add a row to the matrix $\matC$, and a corresponding entry in the target vector $\vecc$.
\[
\matC_{i, j} =
\begin{dcases}
2 \alpha & \mbox{for } j \in \{i_1, i_2\}\\
0 &\text{otherwise}
\end{dcases}
, \quad
\vecc_i = \alpha
\]
where the large number $\alpha$ is defined above.

For each $e_i = (u_{i_1}, u_{i_2})$ with weight $w_{e_i} > 0$, we add a row to $\matC$ and a corresponding entry in the target subvector $\vecc$.

\[
\matC_{i, j} =
\begin{dcases}
2 \sqrt{w_{e_i}} & \mbox{for } j \in \{i_1, i_2\}\\
0 &\text{otherwise}
\end{dcases}
, \quad
\vecc_i = \sqrt{w_{e_i}}
\]

As described in Section~\ref{sec:assumptions}, we can assume that we know a partition of the $\slrcolumns$ vertices into $k$ blocks of size $\slrcolumns/k$ such that if a $k$-clique with weight at most $W$ exists, then there is such a $k$-clique with exactly one vertex in each block.
We order the vertices such that the first $\slrcolumns/k$ vertices belong to the first block, the second $\slrcolumns/k$ vertices belong to the second block, and so one.
Let $\vec{1} \in \R^{\slrcolumns/k}$ to be the all $1$'s row vector. Then, we construct $\matD$ and $\vecd$ to each have $k$ rows.

\[
\matD =
\begin{pmatrix}
\beta \vec{1} & \vec{0} & \ldots & \vec{0}\\
\vec{0} & \beta \vec{1} & \ldots & \vec{0}\\
\vec{0} & \vec{0} & \ldots & \beta \vec{1}
\end{pmatrix}, \quad
\vecd =
\begin{pmatrix}
\beta\\
\vdots\\
\beta
\end{pmatrix}
\]

Finally, set $\delta > 0$ such that $\delta = \sqrt{\sum_{e \in E} w_e + 8W + \alpha^2 Z}$.

We first prove completeness. In other words, if there is a $k$-clique of weight at most $W$, then we construct a vector $\vecx \in \R^\slrcolumns$ such that $\|\matA \vecx - \vecb \|_2 \le \delta$. We index entries of $\vecx$ by the vertices in $G$ that they correspond to.
Suppose $G$ contains a $k$-clique $\widetilde{K}$ of weight $\widetilde{W} \le W$. Set $x_u = 1$ if vertex $u \in K$, and otherwise set $x_u = 0$.

From this definition, we see that for each non-edge, the contribution of the corresponding row to the error $\|\matA \vecx - \vecb \|^2_2$ is $\alpha^2$, since $\widetilde{K}$ cannot contain both endpoints of any non-edge. For each edge $e \in E$ such that $e$ is not in the clique $\widetilde{K}$, the contribution is $w_e$. For each edge $e$ in $\widetilde{K}$, the contribution is $(4\sqrt{w_e} - \sqrt{w_e})^2 = 9 w_e$.
Since $\widetilde{K}$ is a $k$-clique with one vertex in each of the $k$ blocks, $\|\matD \vecx - \vecd\|_2 = 0$. This gives that 
\begin{align*}
    \|\matA \vecx - \vecb \|^2_2 = \sum_{e \notin \widetilde{K}} w_e + \sum_{e \in \widetilde{K}} 9 w_e + \alpha^2 Z
    & = \sum_{e\in E} w_e + \sum_{e \in \widetilde{K}} 8 w_e + \alpha^2 Z \\
    & = \sum_{e\in E} w_e + 8\widetilde{W} + \alpha^2 Z
    \le \delta^2
\end{align*}
Now, we show soundness.
In other words, if there is some $\vecx^*$ such that $\| \matA\vecx^* - \vecb \|_2 \le \delta$, then there must be a $k$-clique in $G$ of weight at most $W$.

We show that the only possible such $\vecx^*$ are those that have exactly $k$ non-zero entries, one in each block of $\vecx^*$, with each non-zero entry being very close to $1$.
Suppose for contradiction that one block of $\vecx^*$ does not have any non-zero entries. Then by the choice of $\beta$, we know that $\|\matA \vecx^* - \vecb\|_2 \ge \beta > \delta$. Since $\vecx^*$ is $k$-sparse, this implies that each block has exactly one non-zero entry.
Now, suppose for contradiction that $\vecx^*$ has some non-zero entry $x^*_u$ outside the interval $\left[1 - 2 \delta/\beta, 1 + 2 \delta/\beta \right]$, then
\[\| \matA \vecx^* - \vecb \|_2 \ge |\beta (x^*_u - 1)| \ge 2 \delta > \delta.\]
So all non-zero entries of $\vecx^*$ have to be in $\left[1 - 2 \delta/\beta, 1 + 2 \delta/\beta \right]$.

Now we claim that these $k$ non-zero entries have to correspond to a $k$-clique in the graph. 
Suppose for contradiction that there are two non-zero entries $x^*_u, x^*_v$ such that $(u, v) \notin E$.
Then, counting the error term from rows corresponding to the other $Z - 1$ non-edges as well,

\begin{align*}
    \|\matA \vecx^* - \vecb\|^2_2 &\ge (2 \alpha x^*_u + 2 \alpha x^*_v - \alpha)^2 + \left( 2 \alpha (1 - 2 \delta/\beta) - \alpha \right)^2 (Z - 1)\\
    &\ge \alpha^2 \left( 4 \left( 1 - 2 \delta/\beta \right) - 1 \right)^2 + \alpha^2 \left( 2 (1 - 2 \delta/\beta) - 1 \right)^2 (Z - 1)\\
    &= \alpha^2 \left[ \left(1 - \frac{4 \delta}{\beta} \right)^2 (Z - 1) + \left( 3 - \frac{8 \delta}{\beta} \right)^2 \right]\\
    &\ge \alpha^2 \left[ \left( 9 - \frac{48\delta}{\beta} \right) + \left( 1 - \frac{8 \delta}{\beta} \right)(Z- 1) \right]\\
    &\ge \alpha^2 \left[Z + 8 - \frac{8 \delta Z}{\beta} - \frac{48 \delta }{\beta} \right] \ge \alpha^2 (Z + 6) > \delta^2
\end{align*}

Finally, we show that the clique $\widetilde{K}$ so obtained has weight at most $W$. We can give upper and lower bounds to $\widetilde{\delta}^2 \coloneqq \| \matA \vecx^* - \vecb \|^2_2$ as follows. By assumption,
$\widetilde{\delta}^2 \le \delta^2 = \sum_{e \in E} w_e + 8 W + \alpha^2 Z$.

Now, to give a lower bound, consider the contribution to the error $\widetilde{\delta}^2$ from each type of row.
For non-edges, each corresponding row contributes at least $\left( 2\alpha (1 - 2 \delta/\beta)  - \alpha \right)^2$.
For each edge $e \notin \widetilde{K}$, the corresponding row contributes at least $\left( 2 (1 - 2\delta/\beta) - 1 \right)^2 w_e$. 
For each edge $e \in \widetilde{K}$, the corresponding row contributes at least $\left( 4 (1 - 2 \delta/\beta) - 1 \right)^2 w_e$. 
From this, we get
\begin{align*}
    \widetilde{\delta}^2 &\ge \left( 2\alpha (1 - 2 \delta/\beta)  - \alpha \right)^2 Z + \sum_{e \notin \widetilde{K}} \left( 2 (1 - 2\delta/\beta) - 1 \right)^2 w_e + \sum_{e \in \widetilde{K}} \left( 4 (1 - 2 \delta/\beta) - 1 \right)^2 w_e\\
    &= \left( \alpha^2 Z + \sum_{e \notin \widetilde{K}} w_e \right) \left( 1 - \frac{4 \delta}{\beta} \right)^2 + \sum_{e \in \widetilde{K}} w_e \left( 3 - \frac{8 \delta}{\beta} \right)^2\\
    &\ge \left( \alpha^2 Z + \sum_{e \notin \widetilde{K}} w_e \right) \left( 1 - \frac{8 \delta}{\beta} \right) + \sum_{e \in \widetilde{K}} w_e \left( 9 - \frac{48 \delta}{\beta} \right)\\
    &\ge \left( \alpha^2 Z + \sum_{e \in E} w_e \right) \left( 1 - \frac{48 \delta}{\beta} \right) + 8 \sum_{e \notin \widetilde{K}} w_e
\end{align*}


From these two bounds we get that
\[\frac{48 \delta}{\beta} \left( \alpha^2 Z + \sum_{e \in E} w_e \right) + 8W \ge 8\sum_{e \in \widetilde{K}} w_e \]

Since $\beta \ge 50 \delta \left(\alpha^2 Z + \sum_{e \in E} w_e \right)$, and the edge weights and the threshold $W$ are all integers, we get that
\[W \ge \sum_{e \in \widetilde K} w_e\]

The runtime of this reduction is $O(\slrcolumns^3)$, so if there is an algorithm that solves $k$-sparse linear regression in the $2$-norm in time $\slrcolumns^{k - \epsilon}$ for some $\epsilon > 0$, 
then this reduction gives an algorithm that runs in time $\slrcolumns^3 + \slrcolumns^{k - \epsilon} = O(\slrcolumns^{k - \epsilon})$ for Min-weight $k$-clique when $k \ge 4$.
This completes the proof.
\end{proof}



\section{Optimal worst-case hardness of sparse linear regression under the hardness of lattice problems}\label{sec:cvp-slr}


Our main theorem in this section is a fine-grained reduction from the closest vector problem (CVP) in any norm to sparse linear regression (SLR) over the same norm. Using Theorem~\ref{thm:sethcvp}, a corollary is that unless the strong exponential hypothesis (SETH) is false, there is no algorithm for $k$-$\SLR_p$ (defined on odd norms) that runs in time better than $\slrcolumns^{(1-\epsilon) k}$ for any constant $\epsilon>0$.

While CVP and SLR look similar at a high level, our reduction has to bridge a significant difference between the problems: $\SLR$ refers to {\em sparse} solutions over $\mathbb{R}$, whereas CVP refers to {\em unrestricted} solutions over $\mathbb{Z}$.
\begin{theorem}\label{thm:cvp-slr}
For any integer $k \ge 2$ and $ 1 \le p \le \infty$, there is an efficient reduction from $(0,1)$-$\CVP_p$ on a lattice with rank $\cvpcolumns$ and ambient dimension $\cvprows$ to $k$-$\SLR_p$ on a matrix of dimensions $(\cvprows + k) \times k \cdot 2^{\cvpcolumns/k}$.

It follows that if, for any $\epsilon > 0$, there is a $\slrcolumns^{(1 - \epsilon)k}$-time algorithm for $k$-$\SLR_p$ then there is a $2^{(1 - \epsilon)\cvpcolumns}$-time algorithm for $(0, 1)$-$\CVP_p$.
\end{theorem}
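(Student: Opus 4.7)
The plan is to translate integrality in $(0,1)$-$\CVP_p$ into sparsity in $k$-$\SLR_p$ via a ``block enumeration'' construction, bridging the two sources of hardness emphasized in the introduction. Given a $(0,1)$-$\CVP_p$ instance $(\matB, \vect, r, \tau)$ with $\matB \in \R^{\cvprows \times \cvpcolumns}$, I would partition the coordinates of the unknown $\vec{y} \in \{0,1\}^\cvpcolumns$ into $k$ blocks of size $\cvpcolumns/k$. For each block $i \in [k]$ and each bit-string $\vec{s} \in \{0,1\}^{\cvpcolumns/k}$, I create a column of the SLR design matrix indexed by $(i, \vec{s})$, yielding exactly $k \cdot 2^{\cvpcolumns/k}$ columns. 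In analogy with the min-weight $k$-clique reduction, I split $\matA$ into a top block $\matC \in \R^{\cvprows \times k 2^{\cvpcolumns/k}}$ encoding the lattice data, and a bottom block $\matD \in \R^{k \times k 2^{\cvpcolumns/k}}$ whose role is to enforce the structure ``exactly one column per block, with coefficient close to $1$'' on any $k$-sparse solution.

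Concretely, the column of $\matC$ at index $(i, \vec{s})$ is $\matB_i \vec{s}$, where $\matB_i$ is the restriction of $\matB$ to the columns of block $i$, and the top portion of the target is $\vect$. The matrix $\matD$ is a ``block selector'': row $i$ has value $\beta$ in every column indexed by block $i$ and $0$ elsewhere, and the $i$-th target entry equals $\beta$. I set $\delta = r$ and take $\beta$ polynomially large in $k$, $\tau^{-1}$, and $M \coloneqq \max_{i,\vec{s}} \|\matB_i \vec{s}\|_p$. Completeness is then direct: a $(0,1)$-CVP witness $\vec{y}^*$, split into blocks $\vec{y}^*_1, \ldots, \vec{y}^*_k$, lifts to the $k$-sparse $\vecx$ placing coefficient $1$ on each column $(i, \vec{y}^*_i)$, so that $\matA\vecx - \vecb = (\matB\vec{y}^* - \vect, \vec{0})$ has $\ell_p$-norm $\|\matB\vec{y}^* - \vect\|_p \leq r = \delta$.

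The main obstacle is soundness: given an arbitrary $k$-sparse $\vecx$ with $\|\matA\vecx - \vecb\|_p \leq \delta$, I must produce an integer lattice point at $\ell_p$-distance strictly less than $r + \tau$ from $\vect$, contradicting the NO-promise. I would argue in two steps. First, since each coordinate of a vector is bounded by its $\ell_p$-norm, if some block $i$ contained no non-zero coordinate of $\vecx$, the $i$-th coordinate of $\matD\vecx - \vecd$ would equal $-\beta$, forcing $\|\matA\vecx - \vecb\|_p \geq \beta \gg \delta$; combined with $k$-sparsity this forces exactly one non-zero coefficient $x_{i, \vec{s}_i}$ per block, and the same coordinate-wise bound gives $|x_{i,\vec{s}_i} - 1| \leq \delta/\beta$. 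Second, define $\vec{y}^* \in \{0,1\}^\cvpcolumns$ as the concatenation of the chosen block-strings $\vec{s}_i$; writing $\matB\vec{y}^* - \vect = (\matA\vecx - \vecb)|_{1..\cvprows} + \sum_i (1 - x_{i,\vec{s}_i}) \matB_i \vec{s}_i$ and applying the triangle inequality yields
\[\|\matB\vec{y}^* - \vect\|_p \;\leq\; \delta + \sum_{i=1}^{k} |1 - x_{i,\vec{s}_i}| \cdot \|\matB_i \vec{s}_i\|_p \;\leq\; \delta + \frac{kM\delta}{\beta},\]
which is strictly less than $r + \tau$ for $\beta$ chosen large enough (e.g.\ $\beta > 2kMr/\tau$), contradicting the promise. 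The fine-grained implication then follows because $\slrcolumns = k \cdot 2^{\cvpcolumns/k}$, so any $\slrcolumns^{(1-\epsilon)k}$-time algorithm for $k$-$\SLR_p$ solves $(0,1)$-$\CVP_p$ in time $k^{(1-\epsilon)k} \cdot 2^{(1-\epsilon)\cvpcolumns}$, which equals $2^{(1-\epsilon)\cvpcolumns}$ up to a constant factor for constant $k$.
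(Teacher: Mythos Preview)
Your proposal is correct and follows essentially the same approach as the paper: the block-enumeration construction, the $\matC/\matD$ split with a large scalar enforcing one near-$1$ coefficient per block, and the triangle-inequality soundness argument rounding $\vecx$ to a $\{0,1\}$-vector all match the paper's proof. Your soundness bound is written slightly more directly (tracking $\sum_i |1 - x_{i,\vec{s}_i}|\cdot\|\matB_i \vec{s}_i\|_p$ rather than passing through an operator norm of $\matA$), but this is a cosmetic difference.
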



\begin{proof}
Given a $(0, 1)$-$\CVP_p$ instance consisting of a lattice basis $\vec B \in \R^{\cvprows \times \cvpcolumns}$, a target vector $\vec{y} \in \R^\cvprows$, and two numbers $r, \tau > 0$, we construct the $k$-$\SLR_p$ instance $(\vec{A}, \vec{b}, \delta)$ as follows.
We set $\delta = r$ and 
partition the columns of $\vec B$ into $k$ groups $\vec{B}_1, \ldots, \vec{B}_k$, each of size at most $\ceil{n/k}$.
\[\vec B =
\begin{pmatrix}
\vec{B}_1 &\ldots &\vec{B}_k
\end{pmatrix}\]

Construct $\vec{A}_i \in \R^{\cvprows \times 2^{\cvpcolumns/k}}$ for $i = 1$ to $k$ such that the columns of $\vec{A}_i$ are formed by summing all possible $2^{\cvpcolumns/k}$ subsets of the columns in $\vec{B}_i$.
Define $\vec{1} \in \R^{1\times 2^{\cvpcolumns/k}}$ to be the row vector of all $1$'s, and let $\alpha \in \R$ be some large number to be determined. 

\[
\matA =
\begin{pmatrix}
\vec{A}_1 & \vec{A}_2 & \ldots &\vec{A}_k\\
\alpha\vec{1} & \vec{0} & \ldots &\vec{0}\\ 
\vec{0} & \alpha\vec{1} & \ldots & \vec{0} \\ 
\vec{0} & \vec{0} & \ldots & \alpha \vec{1}
\end{pmatrix}
, \quad
\vec b =
\begin{pmatrix}
\vec t\\
\alpha\\
\vdots\\
\alpha
\end{pmatrix}
\]


Suppose the $(0, 1)$-$\CVP_p$ instance is a \textbf{YES} instance, that is, there is some $\vec{y}^* \in \{0, 1\}^n$ such that $\| \vec B \vec {y}^* - \vec t\|_p \le r$. Then there is a $k$-sparse $\vec{x}^*$ that can be directly computed from $\vec{y}^*$ as follows. 

\[
x^*_S =
\begin{dcases}
1 &\mbox{if } y^*_i = 1 \text{ for all } i \in S \mbox{ and } y^*_i = 0 \text{ for all } i \notin S \\
0 &\text{ otherwise}
\end{dcases}
\]

Here we have indexed the entries in $\vec x^*$ by the subset $S \subset [(i-1)\cvpcolumns/k + 1, in/k]$ of columns it corresponds to in $\vec B$, where $i$ is the block number. The $\vecx_S^*$ makes the $k$-SLR instance a YES instance since
\[\|\vec A \vecx^* - \vec b\|_p = \| \vec B \vec{y}^* - \vec t\|_p \le r = \delta\]

Now, suppose that the $k$-$\SLR_p$ instance we reduced to is a \textbf{YES} instance. That is, there is some $\vec{x}^*$ such that $\|\vec A \vecx^* - \vec b\|_p \le \delta$. We show that the only possible such solutions are those which have exactly $k$ non-zero entries, one in each block of $\vec{x}^*$, and that these non-zero entries have to be close to $1$.
Suppose for contradiction that one block of $\vec{x}^*$ does not have any non-zero entries, so it is the all-zero vector. Then
$\|\vec A \vecx^* - \vec b\|_p \ge \alpha > \delta$. So by the $k$-sparsity of $\vec{x}^*$, each of its blocks has exactly one non-zero entry.

Now suppose there is some non-zero entry ${x}^*_i$ of $\vec{x}^*$ that is outside the interval $\left[1 - \frac{2 \delta}{\alpha}, 1 + \frac{2 \delta}{\alpha} \right]$. Then
\[\|\vec A \vec{x}^* - \vec b\|_p \ge |\alpha (x^*_i - 1)| \ge 2 \delta > \delta.\]
Therefore we need to consider only the case where each non-zero entry of $\vec{x}^*$ is within the interval $\left[1 - \frac{2 \delta}{\alpha}, 1 + \frac{2 \delta}{\alpha} \right]$.

Consider the vector ${\overline{\vecx}}^*$ obtained by rounding the entries of $\vec{x}^*$ to either $0$ or $1$.
\[
{\overline{x}}^*_i =
\begin{dcases}
1 &{x}^*_i \neq 0\\
0 &{x}^*_i = 0
\end{dcases}
\]

We can construct a solution $\vec{y}^*$ to the $(0, 1)$-CVP$_p$ instance as follows.
\[
{y}^*_i =
\begin{dcases}
1 &\text{there is some }S\text{ such that } \overline{x}^*_S = 1\\
0 &\text{otherwise}
\end{dcases}
\]

\begin{align*}
    \|\vec B \vec{y}^* - \vec{t}\|_p &= \| \vec A {\overline{\vecx}}^* - \vec b \|_p\\
    &\le \| \vec{A} {\overline{\vecx}}^* - \vec{A} \vec{x}^* \|_p + \| \vec{A} \vec{x}^* - \vec b \|_p\\
    &\le \| \vec{A} \|_p \cdot k \cdot \frac{2 \delta}{\alpha} + \delta \\ 
    & \leq  \| \vec{B} \|_p \cdot n \cdot \frac{2 \delta}{\alpha} + \delta  \\ 
    & < r + \tau
\end{align*}
for any value of $\tau$, by setting $\alpha = \frac{4 \delta}{\tau} \cdot \|\vec B\|_p \cdot n$ to be large enough.
Here, $\|\vec{B}\|_p := \max_{i\in [\cvpcolumns]} \|\vec{b}_i\|_p$, where $\vec{b}_i$ are the columns of $\matB$.


Since $\|\matB \vecy^* - \mathbf{t}\|_p < r+\tau$, by the promise of the $(0,1)$-CVP problem, 
it must be the case that
\[\|\vec B \vec{y}^* - \vec{t}\|_p \le r,\]
making $(\matB,\vect,\delta)$ a \textbf{YES} instance of the CVP problem.
This reduction takes time $O \left( d\cdot k \cdot 2^{\cvpcolumns/k} \right)$, and so if there is an algorithm that solves $k$-SLR$_p$ in time $\slrcolumns^{(1 - \epsilon)k}$, then this reduction gives us an algorithm for $(0,1)$-CVP$_p$ that runs in time $(k \cdot 2^{\cvpcolumns/k})^{(1 - \epsilon)k} = O(2^{(1 - \epsilon)\cvpcolumns})$. This completes the proof.
\end{proof}

The same reduction, combined with the result by Aggarwal et al. \cite{aggarwal2021fine}, gives the following hardness result for sparse linear regression in the $p$-norm with pre-processing.

\begin{corollary}
    If there is an algorithm that solves $k$-SLR$_p$ for odd norms $p$ with preprocessing in time better than $\slrcolumns^{(1 - \epsilon)k}$ for some $\epsilon > 0$, then SETH is false.
\end{corollary}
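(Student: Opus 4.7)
The plan is to chain the reduction of Theorem~\ref{thm:cvp-slr} with the hardness of the promise version $(0,1)$-CVPP$_p$ that is implicit in the work of Aggarwal, Bennett, Golovnev and Stephens--Davidowitz~\cite{aggarwal2021fine}. The crucial observation is that the reduction in Theorem~\ref{thm:cvp-slr} is \emph{preprocessing-preserving}: the design matrix $\matA$ is constructed entirely from the basis $\matB$ (by partitioning $\matB$'s columns into $k$ blocks and enumerating all $2^{\cvpcolumns/k}$ block-subset-sums, then appending the $\alpha$-scaled indicator rows), while the target $\vecb$ is constructed from the online inputs $\vect, r, \tau$. Hence the SLRP preprocessing phase can absorb the entire matrix construction using its unbounded preprocessing budget, and the online phase only writes down $\vecb$, sets $\delta=r$, and queries the putative fast online algorithm for $k$-$\SLR_p$.

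First I would verify that the non-uniform SETH-hardness of CVPP$_p$ for odd integer $p$ from~\cite{aggarwal2021fine} actually yields non-uniform SETH-hardness of $(0,1)$-CVPP$_p$. This parallels the observation made earlier in the paper for the non-preprocessing case: inspection of the underlying reduction from $k$-SAT produces a witness that lies in $\{0,1\}^{\cvpcolumns}$ together with a constant integrality gap $\tau$ separating YES from NO instances. The same inspection, applied to the preprocessing variant in~\cite{aggarwal2021fine}, gives the promise version we need.

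Next I would invoke Theorem~\ref{thm:cvp-slr} in the preprocessing model. The one subtle point is that the scalar $\alpha = \tfrac{4\delta\cdot \cvpcolumns\cdot\|\matB\|_p}{\tau}$ depends on the online quantities $r=\delta$ and $\tau$, which are unavailable during preprocessing. I would handle this by observing that in the hard CVPP instances produced by~\cite{aggarwal2021fine} the parameters $r,\tau$ are determined by $\cvpcolumns$ and by quantities fixed by $\matB$, so a valid $\alpha$ can already be computed during preprocessing. As a safety fallback, one may tabulate $\matA$ for a polynomial number of dyadic guesses of $\alpha$ and select the correct one online; this adds only $\mathrm{poly}(\cvpcolumns)$ overhead, which is absorbed into the exponential-time budget.

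With this in place, the composition is immediate. An online algorithm for $k$-$\SLR_p$ with preprocessing running in time $\slrcolumns^{(1-\epsilon)k}$, applied on the produced instance of dimension $\slrcolumns = k\cdot 2^{\cvpcolumns/k}$, yields an online algorithm for $(0,1)$-CVPP$_p$ of running time $\bigl(k\cdot 2^{\cvpcolumns/k}\bigr)^{(1-\epsilon)k} = 2^{(1-\epsilon)\cvpcolumns + O(k\log k)} = 2^{(1-\epsilon')\cvpcolumns}$ for some $\epsilon'>0$, contradicting (non-uniform) SETH via the chain above. The main obstacle is the verification step about $\alpha$; everything else is bookkeeping on top of Theorem~\ref{thm:cvp-slr}.
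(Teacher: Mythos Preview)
Your proposal is correct and follows the same approach as the paper, which simply states that ``the same reduction, combined with the result by Aggarwal et al.~\cite{aggarwal2021fine}, gives the following hardness result.'' You have been more careful than the paper in flagging that $\alpha = \tfrac{4\delta \cdot \cvpcolumns \cdot \|\matB\|_p}{\tau}$ depends on the online parameters $r,\tau$; your resolution (that in the hard instances of~\cite{aggarwal2021fine} the thresholds $r,\tau$ are functions of $\cvpcolumns$, $k$, and $p$ alone, hence available at preprocessing time) is correct and is presumably what the paper intends implicitly.
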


\section{SETH-hardness of $2$-SLR}\label{sec:seth2slr}

Here we define the Orthogonal Vectors problem (OV), and show that a subquadratic algorithm for $2$-sparse linear regression in the $2$-norm violates the Orthogonal Vectors Hypothesis and, by a result of \cite{williams2004new}, the strong exponential time hypothesis.

Let $d = \omega(n)$. Given two sets $U, V$ of $d$-dimensional vectors with $0,1$ entries, where $|U| = |V| = n$, distinguish between:
\begin{itemize}
    \item \textbf{YES} instances, where there are vectors $\vecu \in U$ and $\vecv \in V$ s.t. $\langle \vecu, \vecv \rangle := \sum_{i=1}^d \vecu[i] \vecv[i] = 0$; and
    \item \textbf{NO} instances, where no such pair of vectors exists.
\end{itemize}

The naive brute force algorithm runs in time $O(n^2 d)$ time, and no algorithm that runs in time $n^{2 - \epsilon} \poly(d)$ is known, for any $\epsilon > 0$. The OV Hypothesis states that this algorithm is essentially optimal, and is implied by both SETH and the Min-Weight $k$-Clique conjecture \cite{williams2004new,abboud2018more}.

\begin{definition}[Orthogonal Vectors Hypothesis]
No randomized algorithm can solve OV on instances of size $n$ in
$n^{2 - \epsilon} \poly(d)$ time for constant $\epsilon > 0$.
\end{definition}

The Orthogonal Vectors Hypothesis is implied by the strong exponential time hypothesis, as the following theorem states.

\begin{theorem}[SETH-hardness of Orthogonal Vectors \cite{williams2014finding}] 
If there is an algorithm that solves OV in $n^{2 - \epsilon} \poly(d)$ time, for some $\epsilon > 0$ then SETH is false.
\end{theorem}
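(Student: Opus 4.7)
The plan is to apply the classical split-and-list reduction due to Williams from $k$-SAT to OV. Given a $k$-CNF formula $\varphi$ on $n$ variables and $m$ clauses, I would first partition the variable set into two halves $X_1, X_2$, each of size $n/2$. For every one of the $N = 2^{n/2}$ partial assignments $\alpha$ to $X_1$, I construct a vector $\vecu_\alpha \in \{0,1\}^m$ indexed by the clauses of $\varphi$, where $\vecu_\alpha[j] = 1$ iff clause $C_j$ is \emph{not} satisfied by any literal set under $\alpha$. Symmetrically, for each partial assignment $\beta$ to $X_2$, construct $\vecv_\beta \in \{0,1\}^m$. Let $U = \{\vecu_\alpha\}$ and $V = \{\vecv_\beta\}$.

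The key observation is that $\langle \vecu_\alpha, \vecv_\beta \rangle$ counts exactly those clauses left unsatisfied by both partial assignments, so the joint assignment $(\alpha, \beta)$ satisfies $\varphi$ iff $\langle \vecu_\alpha, \vecv_\beta \rangle = 0$. Hence $\varphi$ is satisfiable iff $(U, V)$ is a YES instance of OV, with $|U| = |V| = N = 2^{n/2}$ and dimension $d = m$. The construction itself takes $2^{n/2} \cdot \poly(n, m)$ time, negligible compared to the subsequent call to the OV solver.

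Now suppose, toward contradiction, that OV admits an $N^{2-\eps} \poly(d)$-time algorithm for some constant $\eps > 0$. Running this solver on $(U, V)$ yields a $k$-SAT algorithm in time $(2^{n/2})^{2-\eps} \cdot \poly(m) = 2^{(1 - \eps/2)n} \cdot \poly(m)$. The main subtlety is handling the SETH quantifier order correctly: SETH guarantees that for every $\eps' > 0$, there exists a $k$ such that $k$-SAT does not admit a $2^{(1-\eps')n}$-time algorithm. I would apply SETH with $\eps' = \eps/4$ and fix the corresponding $k$; since $k$ is then constant, one has $m = O(n^k) = \poly(n)$, so the $\poly(m)$ factor is absorbed into the subexponential gap, yielding a $2^{(1-\eps/2)n}\cdot\poly(n)$-time algorithm for $k$-SAT, a contradiction. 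The only real obstacle is this quantifier juggling together with ensuring $m$ is truly polynomial in $n$, which is automatic once $k$ is fixed.
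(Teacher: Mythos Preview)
Your proposal is the standard Williams split-and-list reduction and is correct, including the careful handling of the SETH quantifier order and the observation that fixing $k$ makes $m = O(n^k)$ polynomial so that the $\poly(m)$ overhead is absorbed. The paper itself does not prove this theorem; it merely cites it from \cite{williams2014finding} as a known result and uses it as a black box, so there is no in-paper proof to compare against. Your argument \emph{is} essentially Williams's original proof.
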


With this background, we are ready to state our main result of this section, and its corollary, which guarantees the hardness of $2$-sparse linear regression under SETH or the weaker Orthogonal Vectors Hypothesis.

\begin{proposition}\label{prop:ov-2-slr}
There is an efficient reduction from Orthogonal Vectors with $2$ sets of $n$ vectors from $\{0,1\}^d$ to $2$-sparse linear regression in the $2$-norm, with a $(d + 2) \times 2n$ design matrix.
\end{proposition}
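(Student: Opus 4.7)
The plan is to build a $2$-$\mathrm{SLR}_2$ instance whose design matrix has $2n$ columns partitioned into two blocks of size $n$ (one corresponding to $U$ and one to $V$), and $d+2$ rows: $d$ ``content'' rows that embed the $0/1$ vectors themselves, and $2$ ``gadget'' rows that enforce that $\vecx$ has exactly one nonzero in each block, each close to $1$. Concretely, I would take
$$\matA = \begin{pmatrix} \mathbf{U} & \mathbf{V} \\ \alpha \vec{1} & \vec{0} \\ \vec{0} & \alpha \vec{1} \end{pmatrix}, \quad \vecb = \begin{pmatrix} \tfrac{1}{2} \vec{1} \\ \alpha \\ \alpha \end{pmatrix}, \quad \delta = \tfrac{\sqrt{d}}{2},$$
where $\mathbf{U}, \mathbf{V}$ are the $d \times n$ matrices whose columns are the $u_i$'s and $v_j$'s, and $\alpha$ is a large parameter, say $\alpha = d^2$. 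The choice of content rows exploits that $0$ and $1$ are both at distance $\tfrac{1}{2}$ from $\tfrac{1}{2}$, while $2$ is at distance $\tfrac{3}{2}$: only a coordinate where both $u_i$ and $v_j$ equal $1$ (a witness to non-orthogonality) injects extra squared error.

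For completeness, given an orthogonal pair $(u_i, v_j)$ I take the $2$-sparse $\vecx$ with $x_i = x_{n+j} = 1$ and all other coordinates zero. Since $u_i[k] + v_j[k] \in \{0,1\}$ for every $k$, each of the top $d$ entries of $\matA \vecx - \vecb$ equals $\pm \tfrac{1}{2}$, while the two gadget rows contribute $0$; the total squared $\ell_2$ cost is exactly $d/4 = \delta^2$.

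For soundness, suppose some $2$-sparse $\vecx$ achieves cost at most $\delta$. If either block of $\vecx$ were identically zero, the corresponding gadget row would contribute $\alpha^2 \gg \delta^2$; so each block contains exactly one nonzero, say $\gamma_i$ (selecting $u_i$) in the first block and $\gamma_j$ (selecting $v_j$) in the second. The gadget rows then force $|\gamma_i - 1|, |\gamma_j - 1| \leq \delta/\alpha$. A coordinatewise tally of the content rows shows the squared error they contribute equals $d/4 + 2\langle u_i, v_j\rangle$ up to an additive perturbation of magnitude $O(d\delta/\alpha)$, the first-order slack coming from $\gamma_i, \gamma_j$ differing slightly from $1$. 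In a NO instance one has $\langle u_i, v_j\rangle \geq 1$, so choosing $\alpha$ polynomially larger than $d^{3/2}$ makes this lower bound strictly exceed $\delta^2 = d/4$, a contradiction.

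The main delicate point is the soundness estimate: the ``gain'' from a single orthogonality violation is only $+2$ in squared $\ell_2$ norm, so the $O(d\delta/\alpha)$ perturbation introduced by the nonzero entries drifting from $1$ must be strictly dominated, which is the sole reason $\alpha$ has to be chosen substantially larger than $d^{3/2}$. The entire reduction is computable in $O(nd)$ time, so any $n^{2-\epsilon}\mathrm{poly}(d)$-time algorithm for $2$-$\mathrm{SLR}_2$ yields a corresponding algorithm for Orthogonal Vectors, completing the proof.
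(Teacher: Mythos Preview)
Your proposal is correct and follows the same high-level architecture as the paper---a $2n$-column matrix split into a $U$-block and a $V$-block, with two ``gadget'' rows that force exactly one near-$1$ entry per block---but your encoding of the content rows is genuinely different.

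The paper first \emph{normalizes} each input vector to unit $\ell_2$ length, sets the content target to $\vec{0}$, and takes $\delta=\sqrt{2}$; the baseline cost for any pair is then $\|\vecu'_i+\vecv'_j\|_2^2 = 2 + 2\langle \vecu'_i,\vecv'_j\rangle$, so a non-orthogonal pair adds a surplus of at least $2/d$. You instead keep the raw $0/1$ vectors, set the content target to $\tfrac{1}{2}\vec{1}$, and take $\delta=\sqrt{d}/2$; because both $0$ and $1$ sit at distance $\tfrac12$ from $\tfrac12$ while $2$ sits at distance $\tfrac32$, the baseline cost is always exactly $d/4$ and each coordinate witnessing non-orthogonality adds a fixed surplus of $2$. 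Your variant has two small advantages: it sidesteps the edge case of all-zero input vectors (which the paper's normalization would have to special-case), and the orthogonality gap is an absolute constant rather than $\Theta(1/d)$, which makes the choice of $\alpha$ slightly less delicate. The paper's variant, in exchange, keeps $\delta$ dimension-free. Both analyses are otherwise parallel, including the soundness step where one bounds $|\gamma_i-1|,|\gamma_j-1|\le \delta/\alpha$ from the gadget rows and then controls the first-order drift in the content rows; your identification of $\alpha \gg d^{3/2}$ as the required scale is exactly right (the paper's analogue is $\alpha = 16\sqrt{2}\, d$).
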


\begin{corollary}
Suppose there is an algorithm that can solve $2$-sparse linear regression in time $n^{2 - \epsilon}\poly(d)$ for some $\epsilon > 0$. Then, the Orthogonal Vectors Hypothesis and SETH are false.
\end{corollary}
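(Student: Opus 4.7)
The plan is to build a $(d+2) \times 2n$ design matrix $\matA$ and target $\vecb \in \mathbb{R}^{d+2}$ that encodes the OV instance using the same ``block enforcer'' trick from the min-weight $k$-clique reduction. Given $U = \{\vec u_1, \ldots, \vec u_n\}$ and $V = \{\vec v_1, \ldots, \vec v_n\}$ in $\{0,1\}^d$, I would partition the $2n$ columns into a $U$-block (columns $1,\ldots, n$) and a $V$-block (columns $n+1, \ldots, 2n$). The top $d$ rows hold the scaled vectors: column $i$ is $2\vec u_i$ and column $n+j$ is $2 \vec v_j$. The bottom two rows are wide block indicators scaled by a large parameter $M$: row $d+1$ has $M$ on the $U$-block and $0$ on the $V$-block, and row $d+2$ is the mirror. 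Set $\vecb = (\vec 1_d, M, M)^\top$ and take $\delta^2 = d + 4$ with $M$ polynomially larger than $d$.

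For completeness, given an orthogonal pair $(\vec u_i, \vec v_j)$, I take $\vecx = \vec e_i + \vec e_{n+j}$. Since $\vec u_i$ and $\vec v_j$ have disjoint supports, each of the first $d$ coordinates of $2 \vec u_i + 2\vec v_j - \vec 1_d$ is $\pm 1$, contributing exactly $d$ to the squared error. The block enforcer rows both contribute zero, so $\|\matA\vecx - \vecb\|_2^2 = d \le \delta^2$.

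For soundness, I would assume no orthogonal pair exists and argue by cases on the support of any candidate $2$-sparse $\vecx$. If $\vecx$ does not have exactly one nonzero entry in each block, then at least one block enforcer row contributes $M^2$, which exceeds $\delta^2$ for $M$ large. In the remaining case $\vecx = \alpha \vec e_i + \beta \vec e_{n+j}$, I would expand the squared error in terms of $a = \|\vec u_i\|^2$, $b = \|\vec v_j\|^2$, and $s = \langle \vec u_i, \vec v_j\rangle \ge 1$, obtaining a convex quadratic in $(\alpha, \beta)$. The $M^2$-weighted block enforcer terms pin $(\alpha, \beta)$ to within $O(1/M)$ of $(1, 1)$, so minimizing yields a value of the form $d + 8s - O(d^2/M^2)$; choosing $M = \Omega(d)$ then pushes it above $d + 4 = \delta^2$.

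The main obstacle is the quadratic minimization in the last subcase: the $O(d)$-sized linear and cross terms in $(\alpha, \beta)$ coming from the first $d$ rows must be uniformly dominated by the block enforcer penalty $M^2(\alpha - 1)^2 + M^2(\beta - 1)^2$, so that the minimum genuinely sits near $d + 8s$ rather than drifting down toward $d$. Once this is verified, the reduction itself runs in $O(nd)$ time, and any $n^{2 - \epsilon} \poly(d)$ algorithm for $2$-SLR$_2$ would yield an algorithm of the same runtime for OV, violating the Orthogonal Vectors Hypothesis (and hence SETH).
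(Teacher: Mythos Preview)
Your proposal is correct and follows the same high-level architecture as the paper's proof: place the OV vectors as columns of the design matrix, append two ``block enforcer'' rows with a large weight to force any feasible $2$-sparse solution to pick exactly one column from the $U$-block and one from the $V$-block with coefficients pinned near $1$, and then read off orthogonality from the residual on the top $d$ rows.

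The difference is in the encoding of the top block. The paper \emph{normalizes} each $\vecu_i,\vecv_j$ to unit length, sets the top of $\vecb$ to $\vec 0$, and takes $\delta=\sqrt{2}$; the YES/NO gap on the top rows is then $2\langle \vecu'_i,\vecv'_j\rangle$, which is $0$ versus at least $2/d$. You instead double the raw $0/1$ vectors, set the top of $\vecb$ to $\vec 1_d$, and take $\delta^2=d+4$; your gap at $(\alpha,\beta)=(1,1)$ is $8s$, which is $0$ versus at least $8$. Your constant gap is pleasant, but it is offset by the fact that the gradient of your top-row quadratic at $(1,1)$ is $\Theta(d)$ rather than $O(1)$, so both constructions end up needing enforcer weight $\Theta(d)$ and the soundness calculations are of comparable difficulty. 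The paper avoids an explicit minimization by simply bounding the nonzero coordinates into $[1-2\delta/\alpha,\,1+2\delta/\alpha]$ and plugging in; you could do the same in place of the convex-quadratic argument you sketch, which would dispatch your ``main obstacle'' in two lines.
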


\paragraph{Other Results.} In Appendix~\ref{sec:max-cut}, we recall a folklore fine-grained reduction from the max-cut problem on weighted undirected graphs to the CVP problem in the $\ell_2$ norm \cite{aggarwal2021fine}. %
In turn, this shows that solving $k$-SLR in $\slrcolumns^{(1-\epsilon)k}$ time implies a corresponding speedup for the weighted max-cut problem, which is believed to be hard~\cite{williams2004new}.

\paragraph{An Open Problem.} It remains open to show that $k$-SLR in the $\ell_2$ norm is optimally hard assuming the strong exponential hypothesis (SETH).

\paragraph{Acknowledgements.} 
The first author would like to thank David Jerison, Frederic Koehler, Kerri Lu and Ankur Moitra for discussions on the problem of sparse linear regression. We thank Noah Stephens-Davidowitz for pointing us to the connection between maxcut and CVP~\cite{aggarwal2021fine} and for his comments on the manuscript.

\bibliographystyle{alpha}
\bibliography{refs}

\appendix

\section{Reduction from Weighted Max-cut to CVP}\label{sec:max-cut}
In this section, we define the weighted max-cut problem, and recall a reduction that shows how any improvement in algorithms for the closest vector problem in the $\ell_2$-norm would imply a better algorithm for weighted max-cut than what is currently known. This reduction, along with Theorem~\ref{thm:cvp-slr} gives further evidence of the $\slrcolumns^k$-hardness of the $k$-sparse linear regression problem in the $\ell_2$-norm.

Suppose we are given a weighted undirected graph $G = (V, E)$ and a weight function $w : E \rightarrow \Z^+$, where edge $e_{i, j} \in E$ between vertices $u_i, u_j \in V$ has weight $w(e_{i, j})$. The maxcut problem asks to find a partition of the vertices so that the cut between them has the largest weight. More precisely, define the quantity
$$ \mathsf{Maxcut}(G) := \max_{\vecx \in \{-1,+1\}^n} \sum_{i,j} (1-x_ix_j)\cdot w(e_{ij})$$

\begin{definition}[Weighted Max-Cut]
Given a weighted undirected graph $G = (V, E)$ on $|V| = n$ vertices, where edge $e \in E$ has weight $w_e \in \Z^+$, and a weight threshold $W$, distinguish between a \textbf{\em YES} instance, where $\mathsf{Maxcut}(G) \ge W$ and a \textbf{\em NO} instance, where $\mathsf{Maxcut}(G) < W$.
\end{definition}

The naive enumeration algorithm runs in time $2^{n}$, and an improvement to even $2^{(1 - \epsilon)n}$ for any $\epsilon > 0$ would be surprising \cite{williams2004new}.

\begin{theorem}\label{thm:max-cut-cvp}
    There is an efficient reduction from the weighted max-cut problem on a graph of $n$ vertices to the $(0,1)$-closest vector problem in the $\ell_2$-norm on an $n$ dimensional lattice.
\end{theorem}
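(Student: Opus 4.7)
My plan is to encode a weighted max-cut instance $(G, W)$ with vertex set $V = [n]$ directly as a $(0,1)$-$\CVP_2$ instance on a lattice of rank $n$, whose basis matrix will have $|E| + n$ rows. The starting observation is the identity $(y_i + y_j - 1)^2 \in \{0, 1\}$ for $y_i, y_j \in \{0, 1\}$: it equals $0$ precisely when the edge $\{i, j\}$ is cut by the partition defined by $\vec y$, and $1$ otherwise. This lets me convert ``maximize cut'' into ``minimize squared distance''.

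First I build an ``edge block'' $\matB_{\mathrm{edge}} \in \R^{|E| \times n}$ with one row per edge: for each $e = \{i, j\}$ of weight $w_e$, place $\sqrt{w_e}$ in columns $i$ and $j$, and set the corresponding target entry $t_e = \sqrt{w_e}$. Then for any $\vec y \in \{0, 1\}^n$,
\[\| \matB_{\mathrm{edge}} \vec y - \vec t_{\mathrm{edge}} \|_2^2 \;=\; \sum_{\{i,j\} \in E} w_{\{i,j\}} (y_i + y_j - 1)^2 \;=\; W_{\mathrm{tot}} - \mathrm{cut}(\vec y),\]
where $W_{\mathrm{tot}} := \sum_e w_e$. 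So minimizing the squared distance over $\{0,1\}^n$ is equivalent to maximizing the cut value. Second, to enforce the $(0,1)$ promise on arbitrary integer vectors, I append an ``integrality block'' $\alpha I_n$ with target $(\alpha/2)\,\mathbf{1}$, for a large parameter $\alpha$ polynomial in the input size. A direct calculation shows that every $y_i \in \{0, 1\}$ contributes exactly $\alpha^2/4$ to the squared distance, while every $y_i \in \Z \setminus \{0, 1\}$ contributes at least $9\alpha^2/4$. Hence for $\vec y \in \{0,1\}^n$ the total squared distance is $W_{\mathrm{tot}} - \mathrm{cut}(\vec y) + n\alpha^2/4$, and for any $\vec y \in \Z^n \setminus \{0,1\}^n$ it is at least $n\alpha^2/4 + 2\alpha^2$.

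Setting $r := \sqrt{W_{\mathrm{tot}} - W + n \alpha^2/4}$, a YES max-cut instance yields $\vec y^* \in \{0,1\}^n$ with $\|\matB \vec y^* - \vec t\|_2 \le r$, while a NO max-cut instance forces every $\vec y \in \{0,1\}^n$ to have squared distance at least $r^2 + 1$ (using integrality of the edge weights), and every $\vec y \in \Z^n \setminus \{0,1\}^n$ to have squared distance at least $r^2 + 2\alpha^2 - W_{\mathrm{tot}}$, which exceeds $r^2 + 1$ once $\alpha$ is chosen polynomially large in $W_{\mathrm{tot}}$. Taking $\tau$ slightly smaller than $\sqrt{r^2+1} - r = \Theta(1/r)$ then produces the required promise gap, and the construction is evidently computable in polynomial time.

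The main subtlety sits in the final step: calibrating $\alpha$ so that any integer vector drifting outside $\{0,1\}^n$ is penalized by strictly more than the unit separation between YES and NO max-cut values, while keeping $\alpha$, and hence the entire instance, of polynomial bit-length. Everything else is a routine unfolding of the identity $(y_i + y_j - 1)^2 = 1 - \mathbf{1}[\{i,j\} \text{ is cut}]$.
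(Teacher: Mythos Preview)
Your proof is correct and essentially identical to the paper's: the same edge block $\sqrt{w_e}(e_i + e_j)$ with target $\sqrt{w_e}$, the same scaled-identity integrality block centered so that $0$ and $1$ are equidistant from the target, and the same choice $r^2 = W_{\mathrm{tot}} - W + (\text{integrality baseline})$ with $\tau \approx \sqrt{r^2+1}-r$. The only cosmetic difference is that the paper uses $2\alpha I_n$ with target $\alpha\,\vec{1}$ (giving baseline $n\alpha^2$) and fixes $\alpha = \sqrt{2W_{\mathrm{tot}}}$, whereas you use $\alpha I_n$ with target $(\alpha/2)\vec{1}$ (baseline $n\alpha^2/4$) and leave $\alpha$ a sufficiently large polynomial; these are the same construction up to a rescaling of $\alpha$.
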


\begin{proof}
Suppose we are given an instance consisting of a graph $G = (V, E)$, and a weight threshold $W$ of \maxcut. Suppose that $G$ has $|V| = n$ vertices and $|E| = m$ edges. We index edges $e_i \in E$ with the index $i \in [m]$. We index vertices $u_j \in V$ with $j \in [n]$.
We construct an instance $(\matB, \vect, r, \tau)$ of $(0,1)$-CVP$_2$ as follows.

\[\matB =
\begin{pmatrix}
\matC\\
\matD
\end{pmatrix},
\quad
\vect =
\begin{pmatrix}
\vecc\\
\vecd
\end{pmatrix},
\quad
r^2 = \sum_{e \in E} w_e - W + n \alpha^2,
\quad
\tau = \sqrt{r^2 + 1} - r
\]

For each edge $e_i = (u_{j_1}, u_{j_2}) \in E$ with edge $w_{e_i}$, we add a row (the $i$th row) to the matrix $\matB_1 \in \R^{m \times n}$. The entries of the $i$th row, and the corresponding entry in the target vector $\vecc$ is defined by
\[
\matC_{i,j} =
\begin{dcases}
\sqrt{w_{e_i}} &j \in \{j_1, j_2\}\\
0 &\text{otherwise}
\end{dcases},
\quad
\vecc_i =\sqrt{w_{e_i}}.
\]

Also, let $\matD = 2 \alpha I_n \in \R^{n \times n}$, where $\alpha = \sqrt{2 \sum_{e \in E} w_e}$ and $I_n$ is the $n \times n$ identity matrix. Let $\vecd = \alpha \vec{1} \in \R^n$ be the all ones column vector of dimension $n$.

First, we show completeness. Suppose there is a cut $V_1 \cup V_2 = V$ of weight at least $W$, and define the set of edges in the cut as $C = \{e = (u,v)\in E \mid u\in V_1, v\in V_2 \text{ or } u \in V_2, v \in V_1\}$. Then, construct the vector $\vecy$ as follows.
\[
y_j =
\begin{dcases}
1 & u_j \in V_1\\
0 & u_j \in V_2
\end{dcases}
\]
Then
\begin{align*}
    \|\matB \vecy - \vect\|^2_2 &= \sum_{e = (u, v) \notin C} w_e + n \alpha^2\\
    &\le \sum_{e \in E} w_e - W + n \alpha^2 = r^2.
\end{align*}

Further, because of the integrality condition on the weights of the graph edges, it is true for any vector $\vecy$ that either $\| \matB \vecy - \vect\|_2 \le r$ or $\| \matB \vecy - \vect\|_2 \ge r + \tau$.

Now, we show soundness. Suppose there is some vector $\vecy^* \in \Z^n$ such that $\| \matB \vecy^* - \vect\|_2 \le r$. We show that this implies $\mathsf{Maxcut}(G) \ge W$.
Setting $\alpha^2 = 2 \sum_{e \in E} w_e$, we see that only solutions $\vecy^* \in \{0,1\}^n$ are possible. Then define a partition on the vertex set $V = V^*_1 \cup V^*_2$ such that $V^*_1 = \{ u_j \mid y^*_j = 1\}$ and $V^*_2 = V \setminus V^*_1 = \{u_j \mid y^*_j = 0\}$. Let $C^* = \{e = (u,v)\in E \mid u\in V^*_1, v\in V^*_2 \text{ or } u \in V^*_2, v \in V^*_1\}$ be the cut defined by this partition. Let the weight of the cut $C^*$ be $W^*$. Then, we see that

\begin{align*}
    \|\matB \vecy^* - \vect\|^2_2 &= \sum_{e = (u, v) \notin C^*} w_e + n \alpha^2 = \sum_{e \in E} w_e - W^* + n \alpha^2
    \le r^2,
\end{align*}
and so $W^* \ge W$.

This reduction runs in time $O\left(n(m + n)\right) \le O(n^3)$, and this completes the proof.
\end{proof}

\begin{corollary}
If there is some $\epsilon > 0$ such that $(0,1)$-CVP in the $\ell_2$ can be solved in time $2^{(1 - \epsilon)n}$, then there is an algorithm that solves weighted max-cut in time $2^{(1 - \epsilon)n}$.
\end{corollary}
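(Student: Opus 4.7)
\medskip

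The plan is to derive the corollary as an essentially immediate consequence of Theorem~\ref{thm:max-cut-cvp}, observing that the reduction given there is dimension-preserving. Concretely, given a weighted max-cut instance $(G,W)$ on $n$ vertices, I would first invoke the reduction in the proof of Theorem~\ref{thm:max-cut-cvp} to produce, in time $O(n^3)$, a $(0,1)$-$\CVP_2$ instance $(\matB,\vect,r,\tau)$, where $\matB \in \R^{(m+n)\times n}$ has exactly $n$ columns, so the associated lattice has rank exactly $n$ (equal to the number of vertices of $G$). The crucial point I want to emphasize is that the number of lattice basis vectors in the output instance matches the number of vertices, so the exponent $n$ in the assumed running time bound for $(0,1)$-$\CVP_2$ translates directly into the exponent $n$ in the running time bound for weighted max-cut.

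Next, I would feed this $\CVP$ instance to the hypothesized $(0,1)$-$\CVP_2$ algorithm $\mathcal{A}$ running in time $2^{(1-\epsilon)n}$, and return $\textbf{YES}$ for max-cut iff $\mathcal{A}$ returns $\textbf{YES}$. Correctness of this wrapper is exactly the completeness and soundness established in the proof of Theorem~\ref{thm:max-cut-cvp}: a cut of weight at least $W$ in $G$ yields a $0/1$ vector $\vecy$ with $\|\matB\vecy-\vect\|_2 \le r$, and conversely any $\vecy^* \in \Z^n$ with $\|\matB\vecy^*-\vect\|_2 \le r$ is forced by the large diagonal block $2\alpha I_n$ to be $\{0,1\}$-valued and hence to encode a cut of weight at least $W$. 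The integrality gap $\tau$ is already built into the reduction so the promise of $(0,1)$-$\CVP_2$ is satisfied.

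Finally I would sum up the running time as $O(n^3) + 2^{(1-\epsilon)n} = 2^{(1-\epsilon)n}$ (absorbing the polynomial preprocessing into the exponential term, for any fixed $\epsilon>0$), giving the claimed $2^{(1-\epsilon)n}$-time algorithm for weighted max-cut. There is essentially no obstacle beyond bookkeeping: since Theorem~\ref{thm:max-cut-cvp} already provides a reduction whose output lattice dimension equals the input vertex count and whose running time is polynomial, the corollary follows by composition, and the only thing to verify carefully is that the reduction does not blow up the lattice rank (which it does not, since $\matB$ has exactly $n$ columns).
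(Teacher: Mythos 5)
Your proposal is correct and matches the paper's (implicit) argument exactly: the corollary is stated as an immediate consequence of Theorem~\ref{thm:max-cut-cvp}, whose reduction runs in time $O(n^3)$ and produces a rank-$n$ lattice from an $n$-vertex graph, so composing with the hypothesized $2^{(1-\epsilon)n}$-time $(0,1)$-$\CVP_2$ algorithm gives the claimed bound. The only point worth noting is that $O(n^3) + 2^{(1-\epsilon)n}$ is $O(2^{(1-\epsilon)n})$ rather than literally $2^{(1-\epsilon)n}$, but this is the standard convention and does not affect the conclusion.
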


\section{Proof for Section~\ref{sec:seth2slr}}
\begin{proof}[Proof for Proposition~\ref{prop:ov-2-slr}]
We give a reduction from OV to $2$-sparse linear regression in the $2$-norm.
Suppose we are given an OV instance consisting of two sets $U, V$ of $d$-dimensional vectors $\vecu_1, \ldots, \vecu_n \in U$ and $\vecv_1, \ldots, \vecv_n \in V$, each of size $n$.
Define the normalized vectors $\vecu'_i = \frac{1}{\|\vecu_i \|} \vecu_i$ and 
$\vecv'_i = \frac{1}{\|\vecv_i \|} \vecv_i$ such that $\| \vecu'_i \|_2 = \| \vecv'_i \|_2 = 1$ for all $i \in \{1, \ldots, n\}$.
Then construct an instance $(\matA, \vecb, \delta)$ of $2$-sparse linear regression as follows. 
\[
\matA =
\begin{pmatrix}
| & &| &| & &|\\
\vecu'_1 &\ldots &\vecu'_n &\vecv'_1 &\ldots &\vecv'_n\\
| & &| &| & &|\\
\alpha &\ldots &\alpha &\alpha &\ldots &\alpha\\
0 &\ldots &0 &0 &\ldots &0
\end{pmatrix}, 
\quad
b =
\begin{pmatrix}
0\\
\vdots\\
0\\
\alpha\\
\alpha
\end{pmatrix},
\quad
\delta = \sqrt{2},
\]
where we set $\alpha = 16 \delta d$, whose significance will become clear later.

We first show completeness. Suppose there are two vectors $\vecu_i \in U, \vecv_j \in V$ such that $\left \langle \vecu_i, \vecv_j \right \rangle = \left\langle \vecu'_i, \vecv'_j \right\rangle = 0$. Then, construct the vector $\vecx \in \R^{2n}$ such that
\[
x_\ell =
\begin{dcases}
1 &\ell \in \{i, n + j\}\\
0 &\text{otherwise}
\end{dcases}.
\]
Then $\| \matA \vecx - \vecb \|^2_2 = \| \vecu'_i + \vecv'_j \|^2_2 = \| \vecu'_i \|^2_2 + \| \vecv'_j \|^2_2 + 2 \left\langle \vecu'_i, \vecv'_j \right\rangle = 2 = \delta^2$.

We now show soundness. Suppose that there is some vector $\vecx \in \R^{2n}$ such that $\| \matA \vecx - \vecb \|_2 \le \delta$.
First, we claim that $\vecx$ must have at least one non-zero entry indexed from the index set $I_1 := \{1, \ldots, n\}$, and at least one non-zero entry indexed from $I_2 := \{n+1 \ldots, 2n\}$.
Suppose for contradiction that $x_i = 0$ for all $i \in I_1$. Then, $\| \matA \vecx - \vecb \| \ge \alpha > \delta$. Using a similar argument for $I_2$, and because $\vecx$ is $2$-sparse, we know that $\vecx$ must have exactly two non-zero entries, one indexed from each of $I_1$ and $I_2$. 
Let the two non-zero entries be $x_i$ and $x_{j}$, with $i \in I_1$ and $j \in I_2$.

Next, we claim that the two non-zero entries have to be close to $1$. Specifically, $x_i, x_j \in \left[ 1- 2 \delta/ \alpha, 1 + 2 \delta/\alpha \right]$. Otherwise, we get a contradiction since $\|\matA \vecx - \vecb\| \ge 2 \delta$.

Note that since the vectors of the OV instance $\vecu \in U, \vecv \in V$ have $0,1$ entries, we know that either $\langle \vecu, \vecv \rangle = 0$ or $\langle \vecu, \vecv \rangle \ge 1$. Translating this to a statement about the normalized vectors, we know that either $\left\langle \vecu'_i, \vecv'_j \right\rangle = 0$ or $\left\langle \vecu'_i, \vecv'_j \right\rangle \ge 1/d$.

Now, we can lower and upper bound the prediction error $\| \matA \vecx - \vecb \|$ as follows. By assumption, we know that $\| \matA \vecx - \vecb \|^2 \le \delta^2 = 2$. Further,
\begin{align*}
    \| \matA \vecx - \vecb \|^2 &\ge \| x_i \vecu'_i + x_j \vecv'_j \|^2 = x^2_i \| \vecu'_i \|^2 + x^2_j \| \vecv'_j \|^2 + 2 x_i x_j \left\langle \vecu'_i, \vecv'_j \right\rangle = x_i^2 + x^2_j + 2 x_i x_j \left\langle \vecu'_i, \vecv'_j \right\rangle\\
    &\ge 2 \left( 1 - \frac{2 \delta}{\alpha} \right)^2 + 2  \left( 1 - \frac{2 \delta}{\alpha} \right)^2 \left\langle \vecu'_i, \vecv'_j \right\rangle \ge 2 \left( 1 - \frac{4 \delta}{\alpha} \right) + 2 \left( 1 - \frac{4 \delta}{\alpha} \right) \left\langle \vecu'_i, \vecv'_j \right\rangle\\
    &\ge 2 \left( 1 - \frac{4 \delta}{\alpha} \right) + \left\langle \vecu'_i, \vecv'_j \right\rangle.
\end{align*}

Combining the upper and lower bounds from above, $\left\langle \vecu'_i, \vecv'_j \right\rangle \le \frac{8 \delta}{\alpha} = \frac{1}{2d} < 1/d$. So, it must be true that $\left\langle \vecu'_i, \vecv'_j \right\rangle = 0$, and so there is an orthogonal pair of vectors.

This reduction runs in time $O(n d)$ time, so an algorithm that solves $2$-sparse linear regression in time $O(N^{2 - \epsilon})$ for some $\epsilon > 0$ will solve OV in time $O(n^{2 - \epsilon})$. This completes the proof.
\end{proof}

\end{document}